
\documentclass{article}

\usepackage{microtype}
\usepackage{multicol}
\usepackage{graphicx}
\usepackage{epstopdf}
\usepackage{epsfig}
\usepackage{subfigure}
\usepackage{booktabs} 
\usepackage{amssymb,amsfonts,amsmath,amsthm}

\newtheorem{assumption}{Assumption}
\newtheorem{theorem}{Theorem}
\newtheorem{lemma}{Lemma}

\usepackage{hyperref}



\usepackage[accepted]{icml2020}

\icmltitlerunning{Distributed Optimization over Block-Cyclic Data}

\usepackage{xpatch}
\makeatletter

\xpatchcmd{\algorithmic}
  {\newcommand{\STATE}{\ALC@it}}
  {\newcommand{\STATE}{\@ifstar\STATEstar\STATEnostar}}
  {}{}
\newcommand{\STATEstar}{\item[]}
\newcommand{\STATEnostar}{\ALC@it}

\begin{document}

\twocolumn[
\icmltitle{Distributed Optimization over Block-Cyclic Data}



\icmlsetsymbol{equal}{*}

\begin{icmlauthorlist}
\icmlauthor{Yucheng Ding}{sjtu}
\icmlauthor{Chaoyue Niu}{sjtu}
\icmlauthor{Yikai Yan}{sjtu}
\icmlauthor{Zhenzhe Zheng}{sjtu}
\icmlauthor{Fan Wu}{sjtu}
\icmlauthor{Guihai Chen}{sjtu}
\icmlauthor{Shaojie Tang}{texas}
\icmlauthor{Rongfei Jia}{ali}
\end{icmlauthorlist}

\icmlaffiliation{sjtu}{Shanghai Jiao Tong University, China}
\icmlaffiliation{texas}{University of Texas at Dallas, USA}
\icmlaffiliation{ali}{ Alibaba Group, Beijing, China}

\icmlcorrespondingauthor{Fan Wu}{fwu@cs.sjtu.edu.cn}

\icmlkeywords{Machine Learning, ICML}

\vskip 0.3in
]



\printAffiliationsAndNotice{}  

\begin{abstract}
We consider practical data characteristics underlying federated learning, where unbalanced and non-i.i.d. data from clients have a block-cyclic structure: each cycle contains several blocks, and each client's training data follow block-specific and non-i.i.d. distributions. Such a data structure would introduce client and block biases during the collaborative training: the single global model would be biased towards the client or block specific data. To overcome the biases, we propose two new distributed optimization algorithms called multi-model parallel SGD (MM-PSGD) and multi-chain parallel SGD (MC-PSGD) with a convergence rate of $O(1/\sqrt{NT})$, achieving a linear speedup with respect to the total number of clients. In particular, MM-PSGD adopts the block-mixed training strategy, while MC-PSGD further adds the block-separate training strategy. Both algorithms create a specific predictor for each block by averaging and comparing the historical global models generated in this block from different cycles. We extensively evaluate our algorithms over the CIFAR-10 dataset. Evaluation results demonstrate that our algorithms significantly outperform the conventional federated averaging algorithm in terms of test accuracy, and also preserve robustness for the variance of critical parameters.
\end{abstract}

\section{Introduction}\label{section:intro}

Federated learning (FL) allows multiple clients to collaborate in the training of a global machine learning model under the coordination of a cloud server without sharing raw data~\cite{DBLP:conf/aistats/McMahan17}. In this setting, the clients (e.g., millions of mobile-device users or hundreds of companies and organizations) train the model in {parallel} using their local data, and the cloud server updates the global model by aggregating the local models collected from the clients in communication iterations.

As a new paradigm of distributed machine learning, the data characteristics in FL significantly differ from those in the traditional distributed optimization~\cite{MuOsdi2014Distributed,LianIcml2018Adpsgd, HanlinNips2018Compress, HanlinIcml2018D2, HanlinIcml2019DS, HaoIcml2019Dynamic}. On one hand, considering the fact that the clients tend to have diverse usage patterns, the amount of local data across clients are usually different, and each client's local dataset just represents a certain aspect of the overall data distribution. That is, the data distributions in FL are \emph{unbalanced} and \emph{non-i.i.d.}~\cite{KevinCorr2019Noniid, MehryarIcml2019Agnos, DBLP:journals/corr/abs-1908-07873, survey2}.
On the other hand, FL data are usually \emph{periodically variational}, which comes from several practical reasons. Due to strict data protection regulations~\cite{link:gdpr, proc:nips19:data:deletion} and resource constraints, in many cases, the clients cannot hold user data for a long time. Thus, the training data may change cyclically over time and follow a certain temporal pattern. For the worldwide FL applications, the training data is also periodically variational since the available clients often follow a diurnal pattern~\cite{DBLP:journals/corr/abs-1908-07873}. We can use a concept of block-cyclicity to model the periodical variation of training data in FL. The training process is composed of several cycles, each of which further contains several data blocks, representing different training data distributions through the cycle. Within each data block, the clients with unbalanced and non-i.i.d. data jointly train the global model in a parallel way.

There has been some effort on developing convergence guarantees for FL algorithms, but none of the existing work has considered the practical data characteristics, i.e., unbalance, non-i.i.d. distribution, and block-cyclic pattern. Some aspects of data features have been partially investigated in the literature. \citet{DBLP:conf/aaai/Yu19} provided a theoretical analysis for the federated averaging (FedAvg), also known as parallel restarted SGD, by assuming that data distributions are non-i.i.d. but remain unchanged through the training process; \citet{DBLP:conf/icml/Eichner19} considered the block-cyclic data pattern, but simply assumed there is only one client.

The above discussed data characteristics introduce two major biases in FL: (1) {\bf client bias:} the model trained on a client would be biased to the client's local training data; and (2) {\bf block bias:} the model trained using the data from a block would be skewed towards the data distribution in this block. To migrate the client bias, we aggregate the local model updates from participating clients, and to overcome the block bias, instead of training a single global model for all the blocks, we construct a series of block-specific predictors\footnote{Throughout this paper, we use the block-specific global model and predictor interchangeably.} by aggregating the model updates from the corresponding block in different cycles. Based on the above basic ideas, we first propose \textbf{Multi-Model Parallel SGD} (MM-PSGD), which takes a block-mixed training strategy, i.e., the training process goes through the mixture of different blocks, but for each block, we average the historical global models generated over it to construct the specific predictor.
For the federated optimization with a strongly convex and smooth objective, MM-PSGD always converges to the optimal global model at a rate of $O(1/\sqrt{NT})$, achieving a linear speedup in terms of the number of clients. MM-PSGD obtains good performance when the block-specific data distributions are not far away from each other. To further improve the performance for the case of extremely different data distributions across blocks, we propose {\bf Multi-Chain Parallel SGD} (MC-PSGD), which augments MM-PSGD with a block-separate training strategy. With this strategy, we construct a set of block-separate global models using only the training data from the corresponding block. The critical step behind MC-PSGD is that in each training round, we select the ``better'' block-specific global models generated from the block-mixed training process and the block-separate training process. We show that MC-PSGD further ensures each block-specific predictor to converge to the block's optimal model at a rate of $O(\sqrt{M}/\sqrt{NT})$ while adding a slight communication overhead.

Our key contributions in this work can be summarized as follows: (1) To the best of our knowledge, we are the first to consider that the data distributions in FL are unbalanced, non-i.i.d., and block-cyclic. (2) Under the practical data characteristics, we propose MM-PSGD and MC-PSGD, both of which return a set of block-specific predictors and have a convergence guarantee of $O(1/\sqrt{NT})$ with respective to the optimal global model. MC-PSGD further ensures that each block-specific predictor would converge to the block's optimal model. (3) We evaluate our algorithms over the CIFAR-10 dataset. Evaluation results demonstrate that our algorithms have significant performance improvement: achieving 6\% higher test accuracy compared with FedAvg, and preserve robustness for the variance of critical parameters, whereas FedAvg fluctuates intensely due to the block-cyclic pattern in training data.

\section{Preliminaries}
We consider a general distributed optimization problem:
 \begin{align}
\min \limits_{\mathbf{x}} F\left(\mathbf{x}\right) = \sum_{i=1}^N p_i\tilde{F}^i\left(\mathbf{x}\right),\label{eq:start}
\end{align}
where $N$ is the number of clients, and $p_i$ is the weight of the $i$-th client such that $p_i\ge 0$ and $\sum_{i=1}^N p_i=1$. Each function $\tilde{F}^i\left(\mathbf{x}\right)$ is defined by:
\begin{align}
\tilde{F}^i\left(\mathbf{x}\right)\overset{\triangle}{=}\mathbb{E}_{\xi \sim D^i}\left[f\left(\mathbf{x},\xi\right)\right],\label{eq:localfunc}
\end{align}
where $D^i$ is the overall distribution of client $i$'s local data, and $f\left(\mathbf{x}, \xi\right)$ is a loss function on the data $\xi$ from $D^i$. To simplify the weighted form, we further let $F^i\left(\mathbf{x}\right)$ represent $p_iN\tilde{F}^i\left(\mathbf{x}\right)$ to scale the local objective. Then, the global objective $F\left(\mathbf{x}\right)$ becomes an average of $F^i\left(\mathbf{x}\right)$:
\begin{align}
 F\left(\mathbf{x}\right)=\frac{1}{N} \sum_{i=1}^N F^i\left(\mathbf{x}\right).\label{eq:start1}
\end{align}

We next formalize the practical cyclicity of training data. There are $C$ cycles in total, each of which consists of $M$ different global data blocks. In FL, the cycles can be the days of model training, and the global data blocks correspond to daytime and nighttime in each day. The $m$-th global data block $D_m$ is further comprised of $N$ local data blocks. Formally, we have
\begin{align}
D_m=\frac{1}{N}\sum_{i=1}^N D_m^i,\label{eq:def-block}
\end{align}
where $D_m^i$ denotes the distribution of client $i$'s $m$-th local data block. Within a block, there will be $E$ rounds as well as $K = E\times I$ iterations in total, where $I$ denotes the number of local iterations in each round. Under such a data-cyclic model, for each client $i$, its data samples are drawn from the distribution
\begin{equation}
\begin{aligned}
&\xi_{t\left(c,m,k\right)}^i\sim D_m^i,\\
&t\left(c,m,k\right) = \left(c–1\right)MK + \left(m–1\right)K+ k,
\end{aligned}\label{eq:sample}
\end{equation}
where $c \in \left\{1,2,\ldots,C\right\}$ indexes the cycles, $m \in \left\{1,2,\ldots,M\right\}$ indexes the blocks, and $k \in \left\{1,2,\ldots,K\right\}$ indexes the local iterations within a block.

Now, the global data distribution is actually \textbf{block-cyclic}, and we rewrite the original global optimization objection in a block-cyclic way:
\begin{align}
F\left(\mathbf{x}\right)=\frac{1}{M}\sum_{m=1}^M F_m\left(\mathbf{x}\right),\label{eq:component-func}
\end{align}
where the block-specific function $F_m\left(\mathbf{x}\right)$ is an average of $F_m^i\left(\mathbf{x}\right)$ in the corresponding block, and $F_m^i\left(\mathbf{x}\right)\overset{\triangle}{=}\mathbb{E}_{\xi \sim D_m^i}\left[f\left(\mathbf{x},\xi \right)\right]$. In this work, we make the following assumptions on the function $F_m^i(\mathbf{x})$.

\begin{assumption}[Strong Convexity]\label{assum:no1}
$F_m^i\left(\mathbf{x}\right)$ is strongly convex with modulus $\mu$: for any $\mathbf{x}, \mathbf{y}$,
\begin{align*}
F_m^i\left(\mathbf{y}\right) \ge F_m^i\left(\mathbf{x}\right) + \left<\mathbf{y} - \mathbf{x}, \nabla F_m^i\left(\mathbf{x}\right)\right> + \frac{\mu}{2}\parallel \mathbf{x}-\mathbf{y}\parallel^2.
\end{align*}
\end{assumption}

\begin{assumption}[Smoothness]\label{assum:no2}
$F_m^i\left(\mathbf{x}\right)$ is smooth with modulus $L$: for any $\mathbf{x},\mathbf{ y}$,
\begin{align*}
F_m^i\left(\mathbf{y}\right)\le F_m^i\left(\mathbf{x}\right)+\left<\mathbf{y}-\mathbf{x},\nabla F_m^i\left(\mathbf{x}\right)\right>+\frac{L}{2}\parallel \mathbf{x}-\mathbf{y}\parallel^2.
\end{align*}
\end{assumption}

To establish the convergence results, we further make some assumptions about the local gradients and the feasible space of model parameters.
\begin{assumption}[Bounded Variance]\label{assum:no3}
During local training, the variance of stochastic gradients on each client is bounded by $\sigma^2$:
\begin{align*}
&\forall \mathbf{x},m,i:\ \mathbb{E}_{\xi\sim D_m^i}\left[\parallel\nabla F_m^i\left(\mathbf{x}\right) - \nabla f\left(\mathbf{x},\xi\right)\parallel^2\right] \le \sigma ^2.
\end{align*}
\end{assumption}

\begin{assumption}[Bounded Gradient Norm]\label{assum:no4}
The expected $l_2$-norm of the stochastic gradients is bounded by $G^2$:
\begin{align*}
&\forall \mathbf{x},m,i:\ \mathbb{E}_{\xi\sim D_m^i}\left[ \parallel\nabla f\left(\mathbf{x},\xi\right)\parallel^2\right] \le G^2.
\end{align*}
\end{assumption}

\begin{assumption}[Bounded Model Parameters]\label{assum:no5}
The $l_2$-norm of any model parameters is bounded by $B^2$:
\begin{align*}
\forall \mathbf{x}:\ \parallel \mathbf{x} \parallel^2 \le B^2.
\end{align*}
\end{assumption}

The above assumptions have also been made in the literature to derive convergence results. Assumptions \ref{assum:no1} and \ref{assum:no2} are standard. Assumptions \ref{assum:no3} and \ref{assum:no4} were made in \cite{DBLP:conf/nips/StichCJ18, DBLP:conf/iclr/Stich19, DBLP:conf/icml/Yu19, DBLP:conf/aaai/Yu19}. Assumption \ref{assum:no5} was made in \cite{DBLP:conf/icml/Zinkevich03, DBLP:conf/icml/Eichner19}.

Some related works have investigated the convergence of FL algorithms with either the non-i.i.d. data distribution or the cyclic data characteristics. To the best of our knowledge, none of existing work has jointly considered these two characteristics. 
\citet{DBLP:conf/aaai/Yu19} proved that FedAvg achieves an $O(1/\sqrt{NT})$ convergence for a non-convex objective under the assumptions that the data distributions are non-i.i.d. but remains unchanged during the training process. This is actually a special case of our problem by setting $M=1$, indicating there is only one data block. \citet{DBLP:conf/icml/Eichner19} observed that block-cyclic data characteristics can deteriorate the performance of both sequential SGD and parallel SGD. However, they only proposed an approach for a sequential case with $O(1/\sqrt{T})$ convergence guarantee. This is another special case of our problem by setting $N = 1$, indicating that there is only one client in total.

In what follows, we discuss how to design FL algorithms to support arbitrary $M$ and $N$ while attaining an $O(1/\sqrt{NT})$ convergence guarantee. The convergence rate is independent on $M$ and is consistent to the result without considering cyclic data feature in \citet{DBLP:conf/aaai/Yu19}. In addition, by setting $N=1$, the convergence result would reduce to $O(1/\sqrt{T})$ as in \citet{DBLP:conf/icml/Eichner19} without considering the non-i.i.d. data distribution.

\begin{figure}[!t]
\centering
\subfigure[An exemplary workflow of MM-PSGD]{
\centering
\includegraphics[width=0.95\columnwidth]{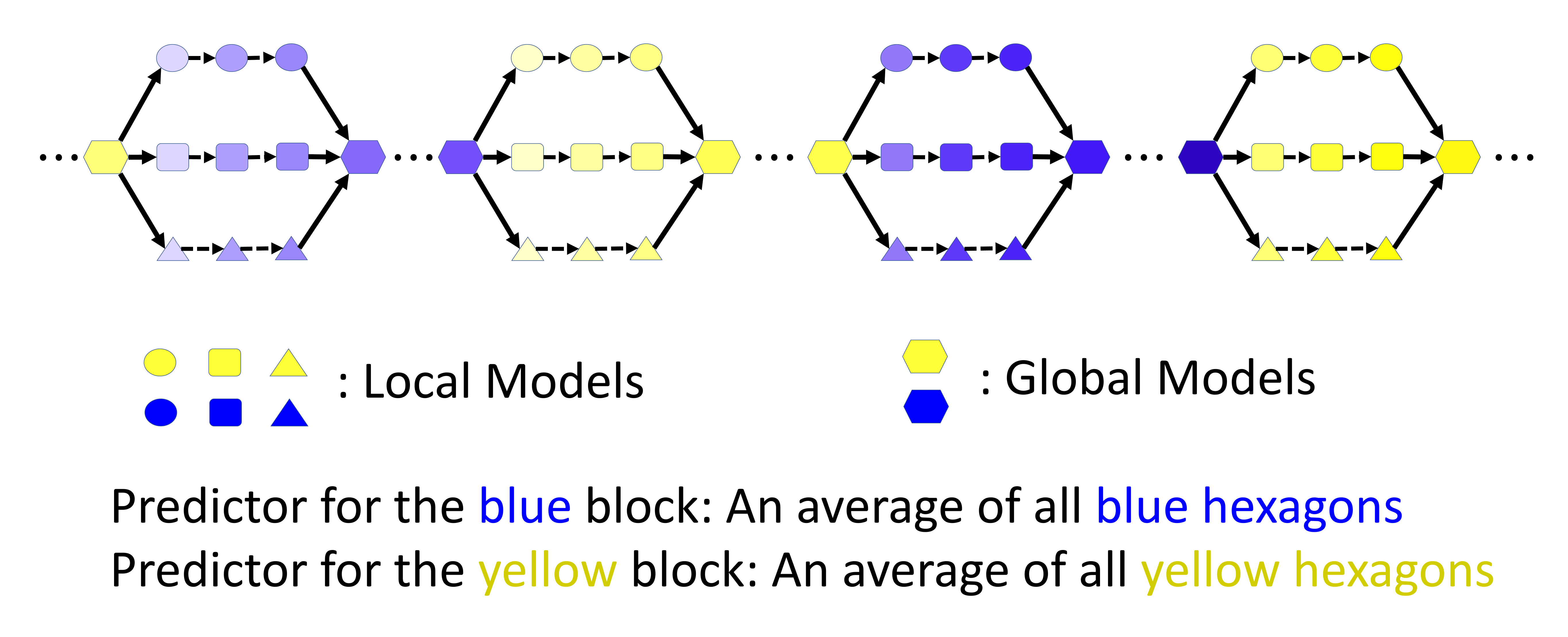}
\label{fig-alg1}
}
\subfigure[An exemplary workflow of MC-PSGD]{
\centering
\includegraphics[width=0.95\columnwidth]{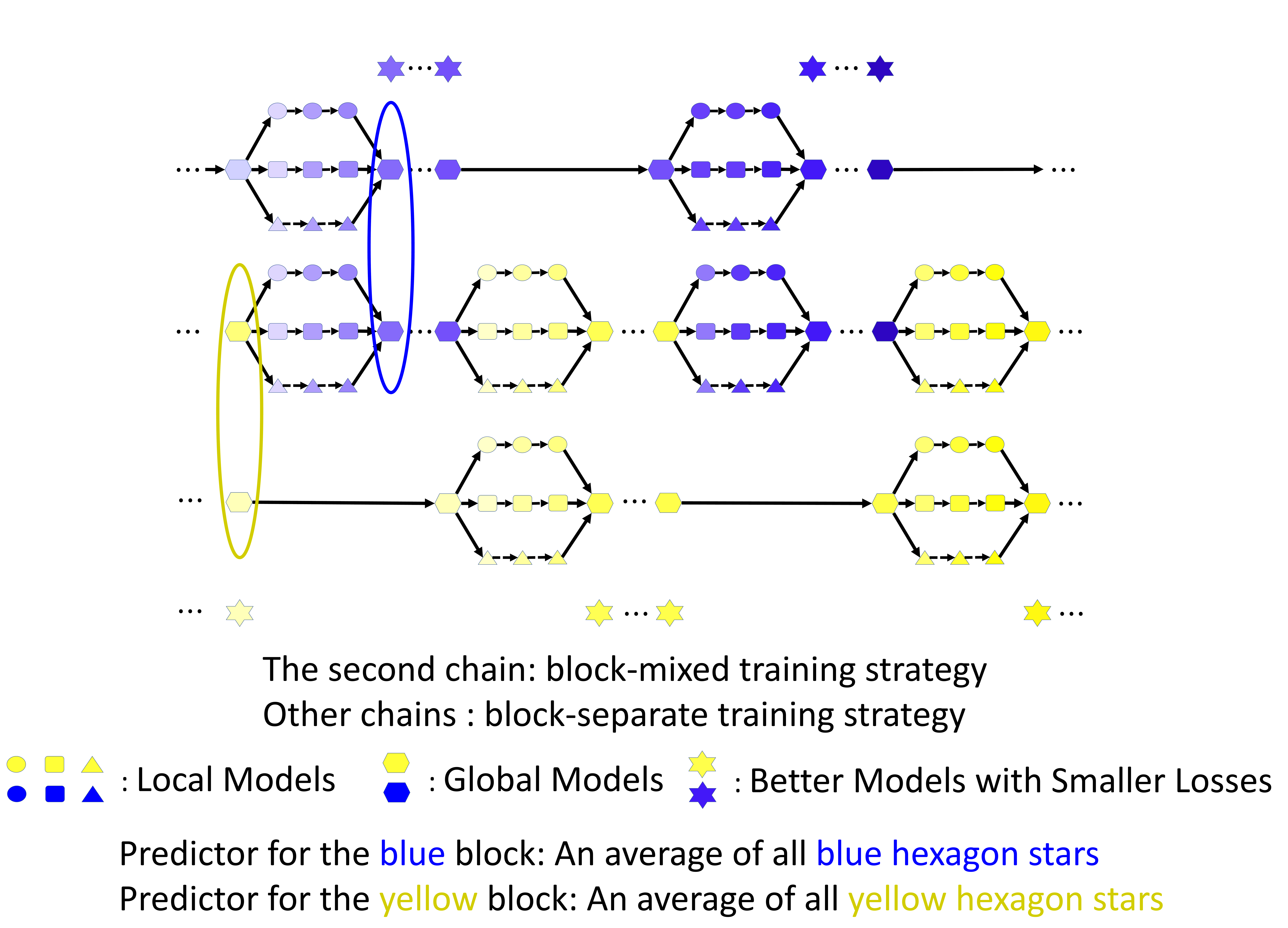}
\label{fig-alg2}
}
\centering
\caption{Exemplary workflows of MM-PSGD and MC-PSGD with $M=2$ blocks and $N=3$ clients. Different colors denote different blocks, and different shapes in a certain color denote parallel clients with a certain block.
}\label{fig:2algs}
\end{figure}

\section{Algorithm Design}
In this section, we propose two algorithms to construct a set of block-specific predictors, making a trade-off between performance guarantee and communication efficiency. To guarantee a mini-max optimal error with respective to the single optimal global model\footnote{Throughout this paper, we use mini-max optimal error to denote the difference between the average loss of our predictors over $M$ blocks and the loss of the single optimal global model.} and reduce communication overhead, we propose Multi-Model Parallel SGD, namely {MM-PSGD}. To further ensure each predictor to converge to the block's optimal model with a slight additional communication cost, we propose Multi-Chain Parallel SGD, namely {MC-PSGD}. Figure~\ref{fig:2algs} illustrates the workflows of MM-PSGD and MC-PSGD.

\subsection{Multi-Model Parallel SGD}\label{subsec:3.1}
In this subsection, we design MM-PSGD, which outputs a set of block-specific predictors with a mini-max optimal error guarantee with respective to the single optimal global model.
Considering that data distributions are non-i.i.d. and block-cyclic, there exist client biase and block biase as introduced in Section \ref{section:intro}.
To overcome the biases, we execute the training process over the mixture of blocks, but for each block, we average the historical global models generated over it to obtain the corresponding predictor.

\begin{algorithm}[!t]
   \caption{Multi-Model Parallel SGD (MM-PSGD)}\label{alg:alg1}
\begin{algorithmic}[1]
   \STATE {\bfseries Input:} Learning rate $\gamma$; Number of blocks $M$; Number of total iterations $T$; Number of local iterations in each communication round $I$.
   \STATE Initialize predictors ${\bf X}$ and local models;
   \FOR{$t=1$ {\bfseries to} $T$}
  		 \IF{$t$ is a multiple of $I$}
		  \STATE* {\tt /* Cloud server's process */}
 		  \STATE Updates the global model: $\bar{\mathbf{x}}_t\gets\frac{1}{N}\sum_{i=1}^N\mathbf{x}_{t}^i$;
  		 \STATE $m \leftarrow$ the current block index;
  		 \STATE $r \leftarrow$ the existing rounds in block $m$;
   		\STATE $\tilde{\mathbf{x}}_m \gets \frac{r}{r+1}\tilde{\mathbf{x}}_{m} + \frac{1}{r+1}\bar{\mathbf{x}}_t$;
		\STATE Broadcasts $\bar{\mathbf{x}}_t$ to all clients: $\mathbf{x}_t^i \gets \bar{\mathbf{x}}_t$; \label{alg1:broad}
		\ENDIF

		\STATE* {\tt /* Each client $i$'s process */}
		\STATE Observes a stochastic gradient $\mathbf{g}_t^i$; \label{alg1:ob}
		\STATE $\mathbf{x}_{t+1}^i\gets\mathbf{x}_t^i-\gamma \mathbf{g}_t^i$; \label{alg1:localup}
   \ENDFOR
   \STATE {\bfseries return} ${\bf X}$
\end{algorithmic}
\end{algorithm}

We sketch MM-PSGD in Algorithm \ref{alg:alg1}. At the beginning, we initialize the learning rate $\gamma$, the number of local iterations $I$, and the vector of predictors $\mathbf{X}=(\tilde{\mathbf{x}}_1, \tilde{\mathbf{x}}_2, \cdots, \tilde{\mathbf{x}}_M)$. The vector $\mathbf{X}$ records the latest predictor for each block. In the training process, there are $T$ iterations and $T/I$ rounds in total.
If $t$ is a multiple of $I$, then the iteration $t$ is a \emph{communication iteration}.
At each communication iteration, the cloud server collects and aggregates the local models from all participating clients to obtain the new global model $\bar{\bf x}_t$ (Line 5), updates the block-specific predictor $\tilde{\mathbf{x}}_m$ for the current block (Line 8), and pushes the new global model to all clients (Line 9).
After receiving the latest global model, each client runs $I$ local SGD iterations according to the observed local gradients in parallel (Lines 11 to 12).

After $T$ iterations in total, the algorithm will return $M$ block-specific predictors ${\mathbf{X}}$. According to Line 8, we can verify that the final predictor $\tilde{\mathbf{x}}_m$ for the block $m$ is the
average of the historical global models calculated at communication iterations belonging to the block $m$, i.e.,
\begin{align}
\tilde{\mathbf{x}}_m=\frac{1}{\parallel \mathcal{R}^m \parallel}\sum_{t\in \mathcal{R}^m}\bar{\mathbf{x}}_t,\label{eq:finalmodel-1}
\end{align}
where $\mathcal{R}^m$ is the set of communication iterations corresponding to the block $m$.

\subsection{Multi-Chain Parallel SGD}\label{subsec:3.2}

\begin{algorithm}[!t]
   \caption{Multi-Chain Parallel SGD (MC-PSGD)}\label{alg:alg2}
   \label{alg:example}
	\begin{algorithmic}[1]
   	\STATE {\bfseries Input:} Learning rates $\gamma$ and $\eta$; Number of blocks $M$; Number of total iterations $T$; Number of local iterations in each communication round $I$.
	\STATE Initialize ${\bf U}$, ${\bf X}$, and ${\bf Y}$;
  	 \FOR{$t=1$ {\bfseries to} $T$}
  		 \IF{$t$ is a multiple of $I$}
			\STATE* {\tt /* Cloud server's process */}
			\STATE $m \leftarrow $ the current block index;
            \STATE Updates block-mixed model: $\bar{\mathbf{x}}_t\gets\frac{1}{N}\sum_{i=1}^N \mathbf{x}_{t}^i$;
            \STATE Updates block-separate model: \\
            $\bar{\mathbf{y}}_{t}\gets \frac{1}{N}\sum_{i=1}^N \mathbf{y}_{t}^i$; \ \ \ $\bar{\mathbf{w}}_{m} \gets \bar{\mathbf{y}}_{t}$;
			\STATE Broadcasts $\bar{\mathbf{x}}_t$ and $\bar{\mathbf{y}}_{t}$ to all clients; \label{bro}
     			\STATE Receives local losses of $\bar{\mathbf{x}}_t$ and $\bar{\mathbf{y}}_{t}$ from all clients;
			\STATE$\bar{\mathbf{u}}_t\leftarrow$ $\bar{\mathbf{x}}_t$ or $\bar{\mathbf{y}}_{t}$ with a smaller average local loss;
   			\STATE $r \leftarrow$ the existing rounds in block $m$;
			\STATE $\tilde{\mathbf{u}}_{m} \gets \frac{r}{r+1}\tilde{\mathbf{u}}_{m} + \frac{1}{r+1}\bar{\mathbf{u}}_t$;
			\IF{Next round's block is a new block}
				\STATE $m_1 \leftarrow $ Next round's block index;
				\STATE Broadcasts $\bar{\mathbf{w}}_{m_1}$ to all clients: $\mathbf{y}_t^i \gets\bar{\mathbf{w}}_{m_1}$;
			\ENDIF
 		        \ENDIF
			\STATE* {\tt /* Each client $i$'s process */}
 			 \STATE Observes local stochastic gradients $\mathbf{g}_t^i$ and $\mathbf{G}_t^i$ for local models ${\mathbf{x}}_t^i$ and ${\mathbf{y}}_t^i$, respectively:\\
  			$\mathbf{x}_{t+1}^i \gets \mathbf{x}_{t}^i-\gamma \mathbf{g}_t^i$;
   			$\mathbf{y}_{t+1}^i  \gets \mathbf{y}_{t}^i-\eta \mathbf{G}_t^i$;
   	\ENDFOR
   \STATE \bfseries return ${\bf U}$
\end{algorithmic}
\end{algorithm}

The block-specific predictors returned by MM-PSGD only have a convergence guarantee with respective to the single optimal global model. In this subsection, we propose MC-PSGD to further improve the performance of the predictors, requiring that each predictor also converges to the optimal block-specific model. With such a result, MC-PSGD would have better performance when the datasets across the blocks are extremely heterogeneous. We note that a separate model trained only by the block's data would converge to the block's optimal model from the results in learning theory~\cite{DBLP:conf/aaai/Yu19}. With this observation, we augment MM-PSGD with a block-separate training strategy. The basic idea behind MC-PSGD is to evaluate the models for each block from the block-mixed training chain like in MM-PSGD and a new block-separate training chain, and use the ``better'' model (the model with a smaller average local loss) to update the block-specific predictor at each communication iteration.

We sketch MC-PSGD in Algorithm \ref{alg:alg2}. We first initialize the learning rates $\gamma$ and $\eta$ for the block-mixed chain and the block-separate chains, respectively. We maintain a vector ${\bf U}=(\tilde{\mathbf{u}}_1, \tilde{\mathbf{u}}_2, \cdots, \tilde{\mathbf{u}}_M)$ to record the latest block-specific predictors, and ${\bf W}=(\bar{\mathbf{w}}_1, \bar{\mathbf{w}}_2,  \cdots, \bar{\mathbf{w}}_M)$ to record the latest block-separate models. In each communication iteration from the block $m$, the cloud server updates the global block-mixed model $\bar{\mathbf{x}}_t$ and the global block-separate model $\bar{\mathbf{y}}_t$ by aggregating the corresponding local models collected from all clients (Lines 6 and 7), and then pushes new global models $\bar{\mathbf{x}}_t$ and $\bar{\mathbf{y}}_t$ to all clients (Line 8). Each client evaluates the local losses of $\bar{{\bf x}}_t$ and $\bar{{\bf y}}_{t}$ over its local data, and sends them back to the cloud server (Line 9). The cloud server calculates the average local losses of $\bar{{\bf x}}_t$ and $\bar{{\bf y}}_{t}$ over all the clients, and selects the model with a smaller average loss as the interim model $\bar{{\bf u}}_t$ (Line 10). With this information,  the cloud server can update the latest predictor $\tilde{{\bf u}}_{m}$ for the current block $m$ (Line 12).
Before entering a new data block (say, data block $m_1$), we reset the local block-separate model $\mathbf{y}_t^i$  of each client $i$ to the latest global block-separate model $\bar{\mathbf{w}}_{m_1}$ in this block (Lines 13 to 16). After receiving the global block-mixed and block-separate models, each client runs $I$ local SGD steps in parallel until the next communication iteration (Line 18).

Finally, MC-PSGD also returns $M$ predictors. According to Line 12 in Algorithm \ref{alg:alg2}, the final predictor of the block $m$ should be:
\begin{align}
\tilde{\mathbf{u}}_m=\frac{1}{\parallel \mathcal{R}^m \parallel}\sum_{t\in \mathcal{R}^m}\bar{\mathbf{u}}_t.\label{eq:finalmodel2}
\end{align}
Compared with MM-PSGD, MC-PSGD needs to exchange extra model parameters and losses between the clients and the cloud server. Specifically, according to Theorems \ref{item:th1} and \ref{item:th2}, the communication overhead of MC-PSGD is $2 M^{\frac{1}{4}}$ times more than that in MM-PSGD. Given that the number of blocks $M$ is usually small in FL, such communication overhead is acceptable in practical system deployment.

\section{Convergence Analysis}\label{section:proof}

\subsection{Convergence of Multi-Model Parallel SGD}
In this subsection, we bound the gap between the loss of the single optimal global model and the average loss of our final predictors over $M$ blocks. This is achieved by bounding the average loss $F_{m}\left(\bar{\mathbf{x}}_{t_r}\right)$ over all the communication iterations $t_r \in \mathcal{R}$, where $\bar{\mathbf{x}}_{t_r}$ denotes the global model in the communication iteration $t_r$, and $\mathcal{R}$ is the set of all communication iterations, i.e., $\mathcal{R} = \bigcup_{m=1}^M \mathcal {R}^m$. We note that the block index $m$ depends on $t_r$ given in equation (\ref{eq:sample}).

The update of the global model $\bar{{\bf x}}_{t_r}$ is an aggregation of model updates in a series of successive iterations from $t_r-I$ to $t_r-1$, each of which is further an aggregation over the local model updates from all clients. By some calculations, we can have the following relation between the global models from two successive communication iterations:
\begin{align}
\bar{\mathbf{x}}_{t_r} = \bar{\mathbf{x}}_{t_r-I}-\gamma \sum_{t=t_r-I}^{t_r-1} \mathbf{g}_t,
\end{align}
where $\mathbf{g}_t$ is the average of local gradients of all client at iteration $t$, i.e., $\mathbf{g}_t =\sum_{i=1}^N \mathbf{g}^i_t/N$. Updating the model $\bar{\mathbf{x}}_{t_r}$ directly from the model $\bar{\mathbf{x}}_{t_r-I}$ in the previous communication iteration needs to consider the accumulated gradient $\sum_{t=t_r-I}^{t_r-1} \mathbf{g}_t$ from multiple training iterations, rather than a single gradient $\mathbf{g}_{t_r-I}$. Thus, it is challenging to use the traditional convexity analysis technique to bound the average loss $F_m\left(\bar{\mathbf{x}}_{t_r}\right)$ over the communication iterations. In contrast, we observe that the relation between the global models from two successive training iterations is easy to describe and analyze:
\begin{align}
\bar{{\bf x}}_{t+1} = \bar{{\bf x}}_{t}  - \gamma {\bf g}_{t}.
\end{align}
Now, it is feasible to use the single iteration gradient ${\bf g}_t$ and the property of strong convexity to bound the average loss $F_m\left(\bar{\mathbf{x}}_t\right)$ from all the training iterations. We next control the gap between $F_m\left(\bar{\mathbf{x}}_t\right)$ over all training iterations and $F_m\left(\bar{\mathbf{x}}_{t_r}\right)$ over all communication iterations by selecting appropriate hyperparameters. With these two steps, we can obtain the desired bound for $F_m\left(\bar{{\bf x}}_{t_r}\right)$ and the convergence guarantee for the predictors returned by MM-PSGD.

\begin{theorem}\label{item:th1}
By setting $\gamma=\frac{\sqrt{N}}{L\sqrt{T}}$ and $I\le T^{\frac{1}{4}}/N^{\frac{3}{4}}$ in MM-PSGD, when $K>CMN$, we have:
\begin{align}
\frac{1}{M}\sum_{m=1}^{M}\mathbb{E}\left[F_m\left(\tilde{\mathbf{x}}_m\right)\right]-F\left(\mathbf{x}^*\right)\le O\left(\frac{1}{\sqrt{NT}}\right).
\end{align}
\end{theorem}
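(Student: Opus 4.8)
The plan is to follow the two-step program announced in the text preceding the theorem: first reduce the claim to a bound on the global models at communication iterations, then split into a per-training-iteration descent argument and a gap-control argument. For the reduction, recall $\tilde{\mathbf{x}}_m=\frac{1}{\|\mathcal{R}^m\|}\sum_{t_r\in\mathcal{R}^m}\bar{\mathbf{x}}_{t_r}$, so convexity (Assumption \ref{assum:no1}) and Jensen's inequality give $F_m(\tilde{\mathbf{x}}_m)\le\frac{1}{\|\mathcal{R}^m\|}\sum_{t_r\in\mathcal{R}^m}F_m(\bar{\mathbf{x}}_{t_r})$. Since the block-cyclic schedule visits every block the same number of times, $\|\mathcal{R}^m\|$ is independent of $m$, and $\frac{1}{M}\sum_m\frac{1}{\|\mathcal{R}^m\|}\sum_{t_r\in\mathcal{R}^m}F_m(\mathbf{x}^*)=\frac{1}{M}\sum_m F_m(\mathbf{x}^*)=F(\mathbf{x}^*)$. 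Hence it suffices to bound $\frac{1}{\|\mathcal{R}\|}\sum_{t_r\in\mathcal{R}}\mathbb{E}\left[F_{m(t_r)}(\bar{\mathbf{x}}_{t_r})-F_{m(t_r)}(\mathbf{x}^*)\right]$, where $m(t_r)$ is the block of iteration $t_r$ and $\mathbf{x}^*$ minimizes the averaged objective $F$ (not any individual $F_m$).

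Next I would carry out the descent over all training iterations using the virtual average $\bar{\mathbf{x}}_t=\frac{1}{N}\sum_i\mathbf{x}_t^i$, which obeys $\bar{\mathbf{x}}_{t+1}=\bar{\mathbf{x}}_t-\gamma\mathbf{g}_t$ at every iteration. Expanding $\mathbb{E}\|\bar{\mathbf{x}}_{t+1}-\mathbf{x}^*\|^2$ and conditioning so that $\mathbb{E}[\mathbf{g}_t]=\frac{1}{N}\sum_i\nabla F_{m(t)}^i(\mathbf{x}_t^i)$, strong convexity (Assumption \ref{assum:no1}) extracts the gap $-2\gamma\left(F_{m(t)}(\bar{\mathbf{x}}_t)-F_{m(t)}(\mathbf{x}^*)\right)$ up to a consensus error. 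Three residual terms remain: the stochastic-noise term $\gamma^2\mathbb{E}\|\mathbf{g}_t-\mathbb{E}\mathbf{g}_t\|^2\le\gamma^2\sigma^2/N$, whose division by $N$ from client averaging is the source of the linear speedup (Assumption \ref{assum:no3}); the consensus/drift term governed by $\mathbb{E}\|\mathbf{x}_t^i-\bar{\mathbf{x}}_t\|^2$, which is $O(\gamma^2 I^2 G^2)$ because clients are synchronized at the start of each round and take at most $I$ steps with $\mathbb{E}\|\mathbf{g}_s^i\|^2\le G^2$ (Assumption \ref{assum:no4}); and an initial-distance term bounded via the parameter radius $B^2$ (Assumption \ref{assum:no5}). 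Summing the rearranged recursion over $t=1,\dots,T$, telescoping (the strong-convexity contraction only helps), and dividing by $2\gamma T$ yields $\frac{1}{T}\sum_{t=1}^T\mathbb{E}\left[F_{m(t)}(\bar{\mathbf{x}}_t)-F_{m(t)}(\mathbf{x}^*)\right]\le\frac{B^2}{\gamma T}+\frac{\gamma\sigma^2}{2N}+O(\gamma^2 I^2 G^2)$. Setting $\gamma=\sqrt{N}/(L\sqrt{T})$ makes the first two terms $O(1/\sqrt{NT})$, and the constraint $I\le T^{1/4}/N^{3/4}$ is precisely what forces $\gamma^2 I^2 G^2=O(1/\sqrt{NT})$.

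The remaining step is to pass from this training-iteration average to the communication-iteration average, i.e.\ to control $\frac{1}{\|\mathcal{R}\|}\sum_{t_r}F_{m(t_r)}(\bar{\mathbf{x}}_{t_r})-\frac{1}{T}\sum_t F_{m(t)}(\bar{\mathbf{x}}_t)$. Within a round all iterates lie in one block and $\|\bar{\mathbf{x}}_t-\bar{\mathbf{x}}_{t_r}\|\le\gamma(t-t_r)G$ by bounded gradients. I expect this to be the main obstacle: the crude estimate via convexity, $F_{m(t_r)}(\bar{\mathbf{x}}_{t_r})\le F_{m(t_r)}(\bar{\mathbf{x}}_t)+\langle\nabla F_{m(t_r)}(\bar{\mathbf{x}}_{t_r}),\bar{\mathbf{x}}_{t_r}-\bar{\mathbf{x}}_t\rangle$, only yields a gap of order $\gamma I G^2=O((NT)^{-1/4})$, which is too weak for the target rate and points in the unfavorable direction (the round-start iterates carry the highest loss). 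Closing it therefore requires genuine cancellation—either letting the first-order movement terms $\langle\nabla F_{m(t_r)}(\bar{\mathbf{x}}_{t_r}),\bar{\mathbf{x}}_{t_r}-\bar{\mathbf{x}}_t\rangle$ telescope against the descent progress already captured in the second step, or replacing them with a smoothness-based second-order estimate (Assumption \ref{assum:no2}) of order $\gamma^2 I^2 G^2$ while absorbing the residual into the left-hand side. Verifying that the specific choices $\gamma=\sqrt{N}/(L\sqrt{T})$, $I\le T^{1/4}/N^{3/4}$, and the condition $K>CMN$ conspire to keep every resulting term at $O(1/\sqrt{NT})$ is, I anticipate, where the real work lies; the Jensen reduction and the per-iteration descent are otherwise routine.
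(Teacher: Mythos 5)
Your skeleton matches the paper's (Jensen reduction to the communication-iteration average, a per-iteration regret recursion over all $T$ iterations, then a correction between the two averages), but there is a genuine gap in your second step, and it is the same gap that explains why you never actually use the hypothesis $K>CMN$. When you expand $\mathbb{E}\|\bar{\mathbf{x}}_{t+1}-\mathbf{x}^*\|^2$, the quadratic term $\gamma^2\mathbb{E}\|\mathbf{g}_t\|^2$ splits into the variance part $\gamma^2\sigma^2/N$ \emph{plus} $\gamma^2\mathbb{E}\|\bar{\mathbf{g}}_t\|^2$, which after separating the consensus error leaves a term of order $\gamma^2\mathbb{E}\|\nabla F_{m(t)}(\bar{\mathbf{x}}_t)\|^2$; your stated conclusion $\frac{B^2}{\gamma T}+\frac{\gamma\sigma^2}{2N}+O(\gamma^2I^2G^2)$ silently drops it. In the single-block i.i.d.\ case this term is absorbed into the left-hand side via $\|\nabla F\|^2\le 2L(F-F(\mathbf{x}^*))$, but here $\mathbf{x}^*$ minimizes $F$ and not $F_m$, so absorption fails; and the crude bound $\mathbb{E}\|\bar{\mathbf{g}}_t\|^2\le G^2$ contributes $\gamma G^2/2=O(\sqrt{N/T})$ after averaging, which destroys the claimed linear speedup. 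The paper's fix is a dedicated lemma showing $\frac{1}{T}\sum_{t=1}^T\mathbb{E}\|\nabla F_{m}(\bar{\mathbf{x}}_t)\|^2\le O(\sqrt{T}/(\sqrt{N}K))$ by applying the non-convex FedAvg rate of Yu et al.\ within each block of $K$ iterations, together with a bound on the oscillation of $F_m$ over the feasible ball of radius $B$; multiplied by the prefactor $\gamma$ this term is $O(1/K)$, and $1/K\le 1/\sqrt{NT}$ exactly when $K\ge CMN$ (since $T=CMK$). Without this device the argument does not close at the rate $O(1/\sqrt{NT})$.

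The third step, which you flag as ``where the real work lies,'' is indeed nontrivial but is resolved by precisely the second of the two options you name: a smoothness (descent-lemma) estimate in which the first-order term $\langle\bar{\mathbf{x}}_t-\bar{\mathbf{x}}_{t-1},\nabla F_m(\bar{\mathbf{x}}_{t-1})\rangle$ is rewritten as $-\frac{\gamma}{2}\bigl(\|\frac{1}{N}\sum_i\nabla F_m^i(\mathbf{x}_{t-1}^i)\|^2+\|\nabla F_m(\bar{\mathbf{x}}_{t-1})\|^2\bigr)$ plus a consensus error of order $\gamma^3I^2G^2L^2$; the negative squares cancel the $\frac{L}{2}\|\bar{\mathbf{x}}_t-\bar{\mathbf{x}}_{t-1}\|^2$ term (using $\gamma\le 1/L$), leaving a per-step increase of $O(\gamma^3I^2G^2L^2+\gamma^2\sigma^2L/N)$ and hence a per-round gap of $O(\gamma^3I^3G^2L^2+\gamma^2\sigma^2IL/N)=O((NT)^{-3/4})$, well within budget. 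So your instinct on that step is correct but unexecuted, whereas the omission of the gradient-norm term in the recursion is a substantive error rather than an unfinished calculation: it is the one place where the block-cyclic, non-i.i.d.\ structure genuinely changes the standard argument, and it is where the condition $K>CMN$ must enter.
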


\begin{proof}[\textbf{Proof Sketch of Theorem \ref{item:th1}}]
We give a proof sketch here and defer the detailed proofs to Appendix \ref{th1-lemma}.

We first introduce a useful notation for the latter analysis:
\begin{align}
\bar{\mathbf{g}}_t = \frac{1}{N}\sum_{i=1}^N \nabla F_m\left(\mathbf{x}_t^i\right).
\end{align}
We then introduce some lemmas.

\begin{lemma}[Regret of one iteration]\label{lemma:th1-1}
Under Assumptions \ref{assum:no1} and \ref{assum:no2}, for each iteration $t$, we have:
\begin{equation}
\begin{aligned}
     &\mathbb{E}\left[F_m\left(\bar{\mathbf{x}}_t\right)\right]-F_m\left(\mathbf{x}^*\right)\\
\le\ &\frac{1}{2\gamma}\left(\mathbb{E}\left[\parallel\bar{\mathbf{x}}_t-\mathbf{x}^*\parallel^2\right]-\mathbb{E}\left[\parallel\bar{\mathbf{x}}_{t+1}-\mathbf{x}^*\parallel^2\right]\right)\\
&+\frac{\gamma}{2}\mathbb{E}\left[\parallel \mathbf{g}_t-\bar{\mathbf{g}}_t\parallel^2\right]
+\frac{L^2}{2\mu N} \sum_{i=1}^{N} \mathbb{E}\left[\parallel\bar{\mathbf{x}}_t-\mathbf{x}_t^i\parallel^2\right]\\
&+\frac{\gamma L^2}{N}\sum_{i=1}^{N}\mathbb{E}\left[\parallel \bar{\mathbf{x}}_t- \mathbf{x}_t^i\parallel^2\right]+\gamma\mathbb{E}\left[\parallel\nabla F_m(\bar{\mathbf{x}}_t)\parallel^2\right].
\end{aligned}
\end{equation}
\end{lemma}

\begin{lemma}[Bounding the variance]\label{lemma:th1-2}
Under Assumption \ref{assum:no3}, it follows that:
\begin{align}
\mathbb{E}\left[\parallel \mathbf{g}_t-\bar{\mathbf{g}}_t\parallel ^2\right] \le \frac{\sigma^2}{N}.
\end{align}
\end{lemma}

\begin{lemma}[Bounding the deviation of local model]\label{lemma:th1-3}
 Under Assumption \ref{assum:no4}, the deviation between the local model ${\bf x}_t^i$ and global model $\bar{{\bf x}}_t$ at each iteration $t$ is bounded by
\begin{align}
\mathbb{E}\left[ \parallel \bar{\mathbf{x}}_t - \mathbf{x}_t^i\parallel^2 \right] \le 4\gamma^2I^2G^2 \notag.
\end{align}
\end{lemma}

\begin{lemma}[Bounding the average of gradients]\label{lemma:th1-5}
Under Assumptions \ref{assum:no2},  \ref{assum:no3},  \ref{assum:no4}, and  \ref{assum:no5},  and choosing $\gamma=\frac{\sqrt{N}}{L\sqrt{T}}$ and $I\le T^{\frac{1}{4}}/N^{\frac{3}{4}}$, we can bound the average of the gradients:
\begin{align}
\frac{1}{T}\sum_{t=1}^T \mathbb{E}\left[ \parallel\nabla F_{m}(\bar{\mathbf{x}}_t)\parallel^2 \right]\le O\left(\frac{\sqrt{T}}{\sqrt{N}K}\right).
\end{align}
\end{lemma}

\begin{lemma}[Bounding the average loss of iterations]\label{lemma:th1-4}
Based on Lemmas \ref{lemma:th1-1}, \ref{lemma:th1-2}, \ref{lemma:th1-3}, and \ref{lemma:th1-5}, and Assumption \ref{assum:no5}, we can bound the gap between the average loss of $\left\{{\bar{{\mathbf{x}}}_t}|t \in [T]\right\}$ from all  iterations and the loss of the single optimal global model:
\begin{equation}
\begin{aligned}
 &\frac{1}{T} \sum_{t=1}^{T} \mathbb{E} \left[F_{m}\left(\bar{\mathbf{x}}_t\right)\right]-F\left(\mathbf{x}^*\right) \\
=\ &\frac{1}{T} \sum_{t=1}^{T} \mathbb{E} \left[F_{m}(\bar{\mathbf{x}}_t)-F_{m}\left(\mathbf{x}^*\right)\right] \\
\le\ &\frac{2B^2}{\gamma T} + \frac{\gamma \sigma^2}{2N} + \frac{2\gamma^2I^2G^2L^2}{\mu} + 4\gamma^3I^2G^2L^2\\
&+O\left(\frac{\gamma \sqrt{T}}{\sqrt{N}K}\right).
\end{aligned}
\end{equation}
\end{lemma}

\begin{lemma}[Bounding the average loss of communication iterations]\label{lemma:th1-6}
Under Assumptions \ref{assum:no2}, \ref{assum:no3}, and \ref{assum:no4}, we can bound the gap between the average loss of the global models $\left\{{\bar{{\mathbf{x}}}_{t_r}}|t_r \in \mathcal{R}\right\}$ from the communication iterations and that of the global models $\left\{{\bar{{\mathbf{x}}}_t}|t \in [T]\right\}$ from all iterations. 
\begin{equation}
\begin{aligned}
   &\frac{1}{\parallel \mathcal{R}\parallel}\sum_{t_r \in \mathcal{R}}\mathbb{E}\left[F_m\left(\bar{\mathbf{x}}_{t_r}\right)\right]\\
\le\ &\frac{1}{T}\sum_{t=1}^T \mathbb{E}\left[F_m\left(\bar{\mathbf{x}}_t\right)\right]+2\gamma^3I^3G^2L^2+\frac{\gamma^2\sigma^2IL }{2N}.
\end{aligned}
\end{equation}
\end{lemma}

By the above lemmas and the convexity of $F_m(\mathbf{x})$, and choosing  $\gamma=\frac{\sqrt{N}}{L\sqrt{T}}$ and $I\le T^{\frac{1}{4}}/N^{\frac{3}{4}}$, when $K>CMN$, we can obtain the convergence guarantee for MM-PSGD:
\begin{equation}
\begin{aligned}
    &\frac{1}{M}\sum_{m=1}^{M}\mathbb{E}\left[F_m\left(\tilde{\mathbf{x}}_m\right)\right]-F\left(\mathbf{x}^*\right) \\
\le\ &\frac{1}{M}\sum_{m=1}^{M}\frac{1}{\parallel \mathcal{R}^m \parallel}\sum_{t_r \in \mathcal{R}^m}\mathbb{E}\left[F_m(\bar{\mathbf{x}}_{t_r})\right]-F\left(\mathbf{x}^*\right) \\
=\ &\frac{1}{\parallel \mathcal{R}\parallel}\sum_{t_r \in \mathcal{R}}\mathbb{E}\left[F_m\left(\bar{\mathbf{x}}_{t_r}\right)\right] - F\left(\mathbf{x}^*\right)\\
\le\ &O\left(\frac{1}{\sqrt{NT}}\right).
\end{aligned}
\end{equation}
\end{proof}

Theorem \ref{item:th1} shows that MM-PSGD converges at a rate of $O(1/\sqrt{NT})$ over block-cyclic data. The convergence rate is independent of the number of blocks $M$, guaranteeing that the performance would not deteriorate as $M$ increases.

\subsection{Convergence of Multi-Chain Parallel SGD}
We now prove the convergence of MC-PSGD. The main operation in MC-PSGD is that in each communication iteration, for each block, we evaluate the models from two chains and select the one with a smaller average local loss to update the predictor. With this operation, the final predictor would outperform the model from either of the two chains (Lemma~\ref{lemma:th2-2}). We further show that the model from the block-mixed chain can achieve the convergence rate of $O(1/\sqrt{NT})$ with respective to the single optimal global model, and the model from the block-separate chain can have the convergence guarantee of  $O(\sqrt{M}/\sqrt{NT})$ with respective to the block's optimal model (Lemma~\ref{lemma:th2-1}). By these steps, we can achieve the convergence result of MC-PSGD.

\begin{figure*}
\centering
\begin{minipage}{0.68\columnwidth}
\centering
\includegraphics[width=0.98\columnwidth]{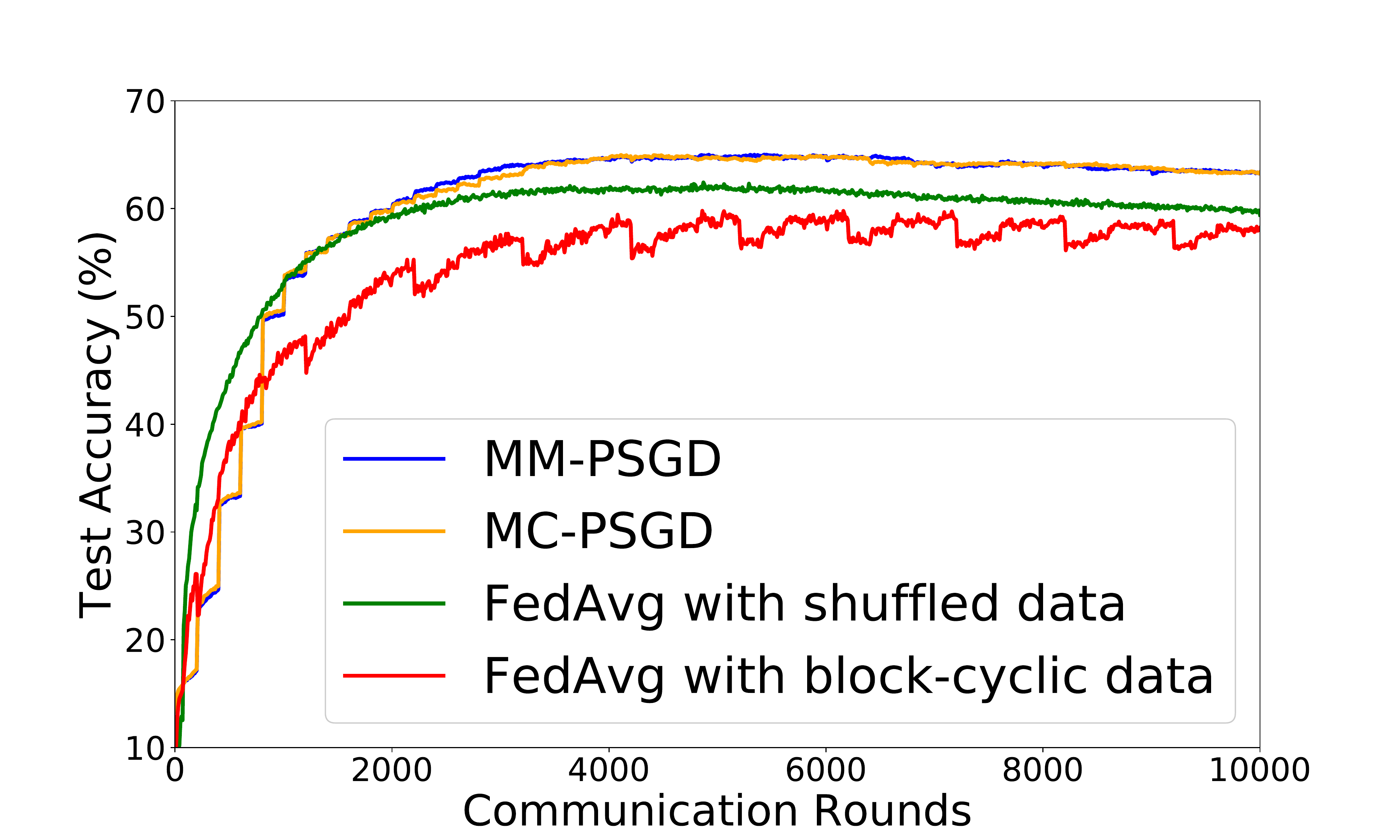}
\caption{Test accuracies of our MM-PSGD and MC-PSGD with block-cyclic data. FedAvg with block-cyclic and shuffled data are introduced as two baselines.} \label{mainfig}
\end{minipage}\hfill
\begin{minipage}{1.36\columnwidth}
\centering
\vskip -0.3in
\subfigure[MM-PSGD]{\label{fig-I-our}
\includegraphics[width=0.48\columnwidth]{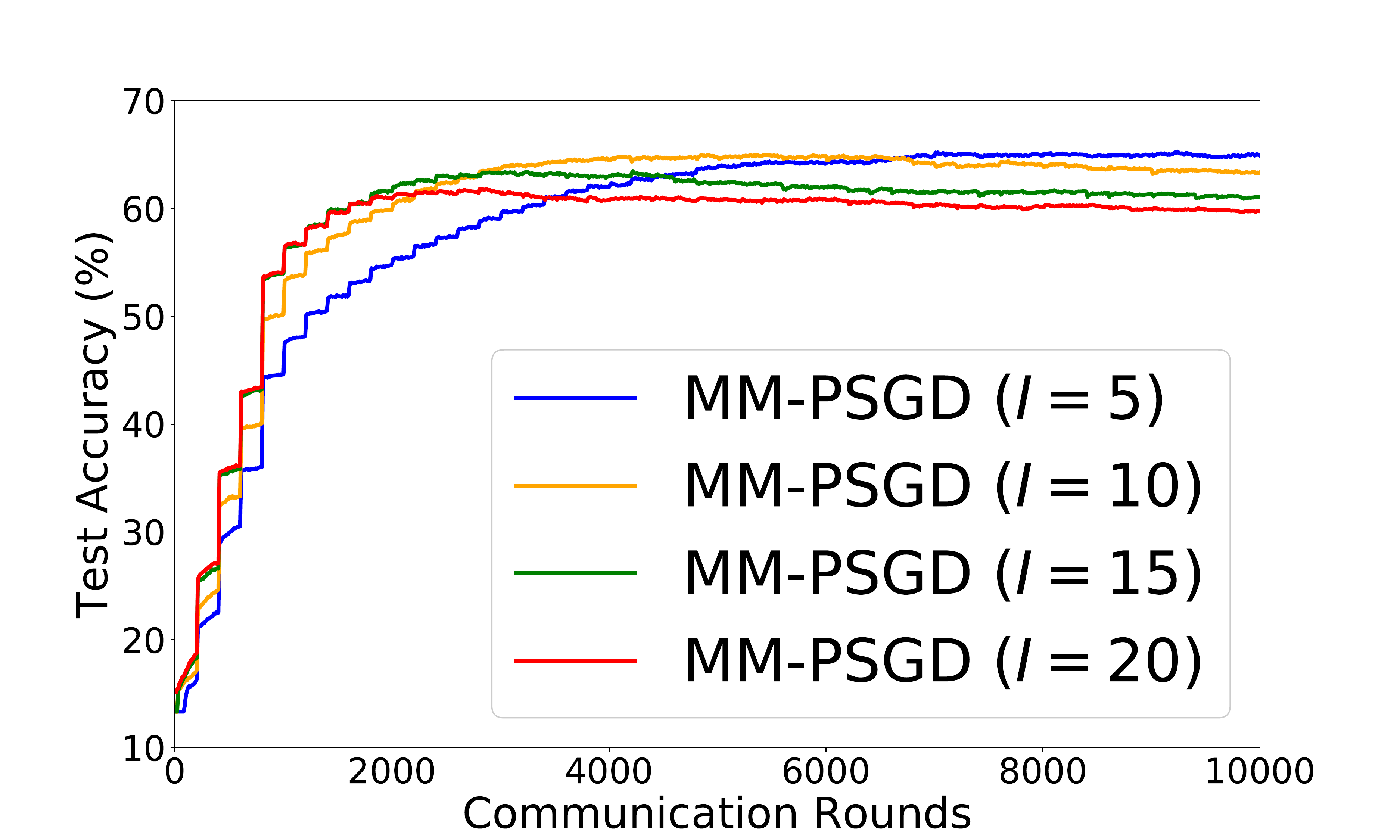}
}
\subfigure[MC-PSGD]{\label{fig-I-fed}
\includegraphics[width=0.48\columnwidth]{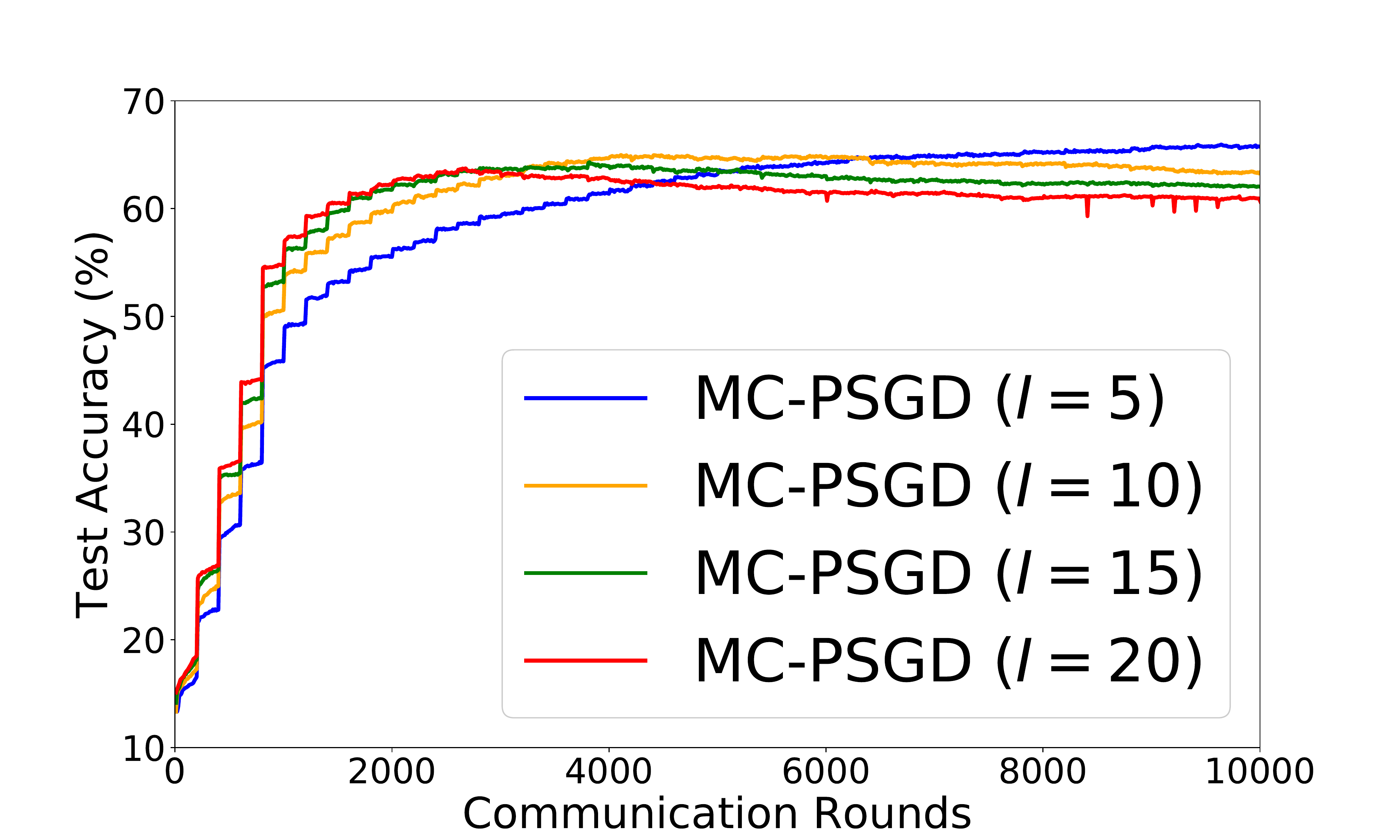}
}%
\caption{Impact of the number of local iterations $I$.}\label{fig-I}
\end{minipage}
\end{figure*}

\begin{figure*}[!t]
\begin{center}
\subfigure[FedAvg]{\label{fig-M-fed}
\includegraphics[width=0.662\columnwidth]{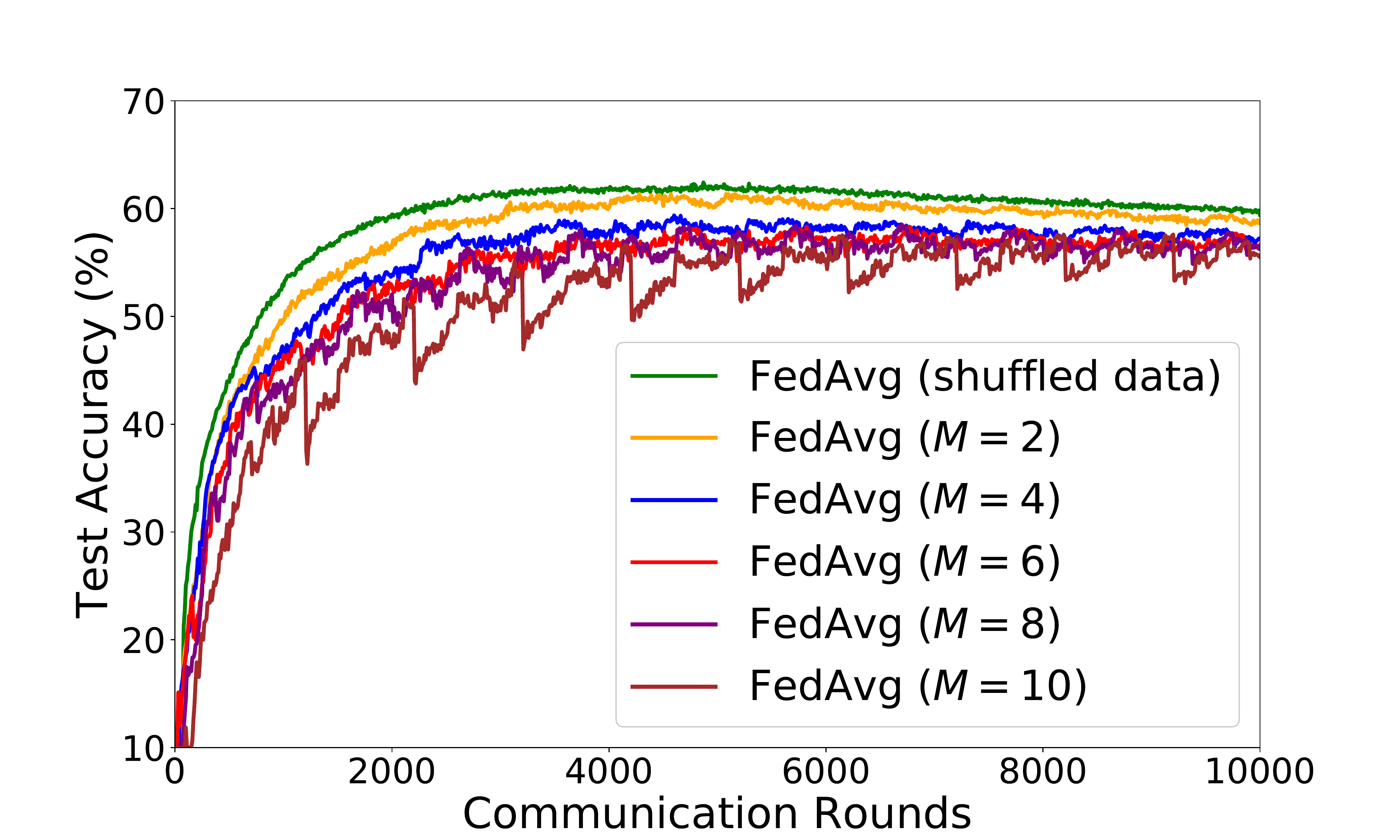}
}
\subfigure[MM-PSGD]{\label{fig-M-multi}
\includegraphics[width=0.662\columnwidth]{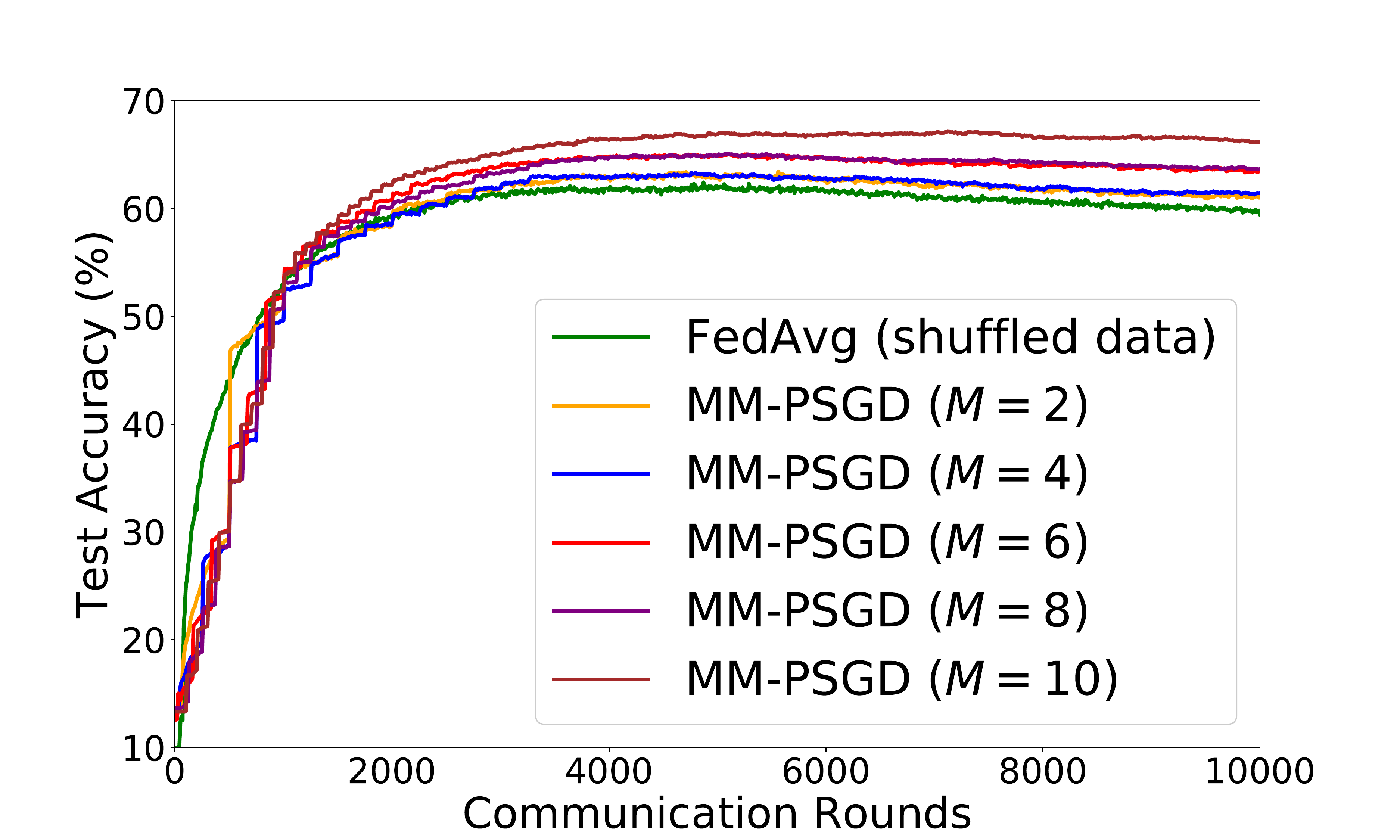}
}
\subfigure[MC-PSGD]{\label{fig-M-2chain}
\includegraphics[width=0.662\columnwidth]{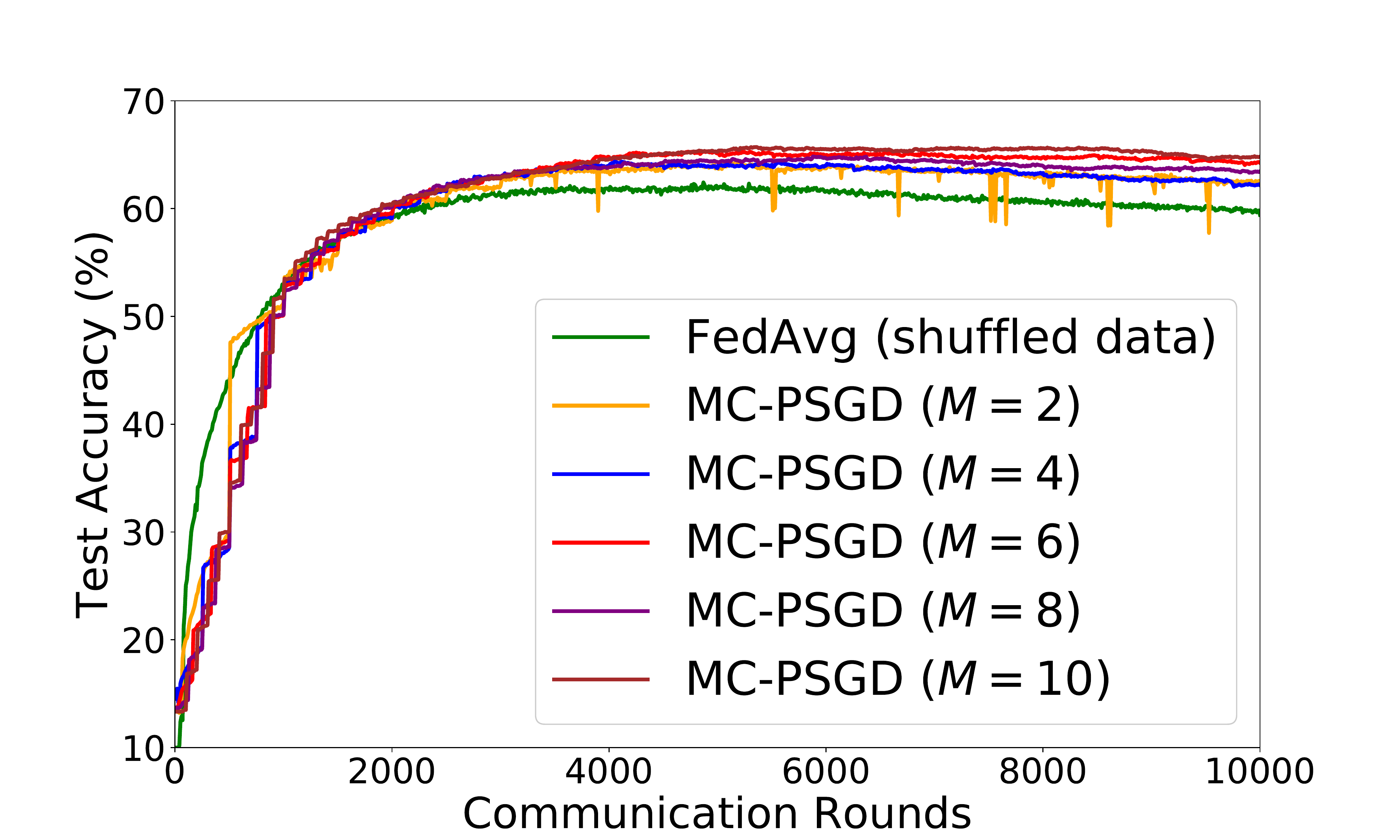}
}
\caption{Impact of the number of blocks $M$. The ideal FedAvg with shuffle data is introduced as a baseline.}\label{fig-M}
\end{center}
\end{figure*}

\begin{theorem}\label{item:th2}
By setting $\gamma=\frac{\sqrt{N}}{L\sqrt{T}}, \eta=\frac{\sqrt{NM}}{L\sqrt{T}}$ and $I\le \frac{(T/M)^{\frac{1}{4}}}{N^{\frac{3}{4}}}$, when $K>CMN$, MC-PSGD has the following convergence results:
\begin{align}
\frac{1}{M}\sum_{m=1}^{M}\mathbb{E}\left[F_m\left(\tilde{\mathbf{u}}_m\right)\right]-F\left(\mathbf{x}^*\right)\le O\left(\frac{1}{\sqrt{NT}}\right),
\end{align}
and for each block $m$,
\begin{align}
\mathbb{E}\left[F_m\left(\tilde{\mathbf{u}}_m\right)-F_{m}\left(\mathbf{x}_m^*\right)\right]
\le
O\left(\frac{\sqrt{M}}{\sqrt{NT}}\right),
\end{align}
where ${\bf x}^*_m$ is the block's optimal model from the block $m$.
\end{theorem}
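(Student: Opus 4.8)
The plan is to separate the argument into a \emph{selection} step that ties the returned predictor $\tilde{\mathbf{u}}_m$ to each of the two chains, and a \emph{per-chain convergence} step in which I recognize each chain as an instance already handled by Theorem~\ref{item:th1}. The pivotal observation is that, in a communication iteration belonging to block $m$, the average local loss of the block-mixed model $\bar{\mathbf{x}}_t$ over the $N$ clients equals $\frac{1}{N}\sum_{i=1}^N F_m^i(\bar{\mathbf{x}}_t)=F_m(\bar{\mathbf{x}}_t)$, and likewise the average local loss of $\bar{\mathbf{y}}_t$ equals $F_m(\bar{\mathbf{y}}_t)$. Since the server (Line~10 of Algorithm~\ref{alg:alg2}) keeps whichever model has the smaller average local loss, I would record the selection identity $F_m(\bar{\mathbf{u}}_t)=\min\{F_m(\bar{\mathbf{x}}_t),F_m(\bar{\mathbf{y}}_t)\}$. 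Combined with the averaging formula~(\ref{eq:finalmodel2}) and the convexity of $F_m$, Jensen's inequality then yields, for every block $m$,
\begin{align}
F_m\left(\tilde{\mathbf{u}}_m\right)\le \frac{1}{\parallel \mathcal{R}^m \parallel}\sum_{t\in \mathcal{R}^m} F_m\left(\bar{\mathbf{u}}_t\right)\le \frac{1}{\parallel \mathcal{R}^m \parallel}\sum_{t\in \mathcal{R}^m}\min\left\{F_m\left(\bar{\mathbf{x}}_t\right),F_m\left(\bar{\mathbf{y}}_t\right)\right\}.
\end{align}
The right-hand side is dominated simultaneously by the block-mixed average and by the block-separate average, so this one inequality drives both claims.

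For the first claim I would discard the block-separate term, using $\min\{\cdot,\cdot\}\le F_m(\bar{\mathbf{x}}_t)$, and then average over $m$. The block-mixed chain $\{\bar{\mathbf{x}}_t\}$ in Algorithm~\ref{alg:alg2} obeys exactly the same aggregation and local update with the same step size $\gamma=\frac{\sqrt{N}}{L\sqrt{T}}$ as MM-PSGD, so it is literally an execution of Algorithm~\ref{alg:alg1}. Consequently Lemmas~\ref{lemma:th1-1}--\ref{lemma:th1-6} apply verbatim, and replaying the closing chain of inequalities in the proof of Theorem~\ref{item:th1} (using that every block contributes the same number of communication iterations, $\parallel\mathcal{R}^m\parallel=\parallel\mathcal{R}\parallel/M$) gives
\begin{align}
\frac{1}{M}\sum_{m=1}^{M}\mathbb{E}\left[F_m\left(\tilde{\mathbf{u}}_m\right)\right]-F\left(\mathbf{x}^*\right)\le \frac{1}{\parallel \mathcal{R}\parallel}\sum_{t_r \in \mathcal{R}}\mathbb{E}\left[F_m\left(\bar{\mathbf{x}}_{t_r}\right)\right]-F\left(\mathbf{x}^*\right)\le O\left(\frac{1}{\sqrt{NT}}\right).
\end{align}

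For the second claim I would instead keep $\min\{\cdot,\cdot\}\le F_m(\bar{\mathbf{y}}_t)$ and argue that, restricted to block $m$, the block-separate chain is a self-contained single-block run. Because the local models $\mathbf{y}_t^i$ are reset to the saved state $\bar{\mathbf{w}}_m$ whenever block $m$ is re-entered (Lines~13--16), the block-separate trajectory for block $m$ is a continuous SGD process that only ever draws gradients from $D_m$, is interrupted $C$ times but resumed exactly, and hence accumulates $T'=CK=T/M$ iterations on the objective $F_m$. This is precisely the $M=1$ specialization of MM-PSGD run for $T'$ iterations, with step size $\eta=\frac{\sqrt{NM}}{L\sqrt{T}}=\frac{\sqrt{N}}{L\sqrt{T'}}$ and local budget $I\le \frac{(T/M)^{1/4}}{N^{3/4}}=\frac{(T')^{1/4}}{N^{3/4}}$, matching the hypotheses of Theorem~\ref{item:th1} under $T\mapsto T'$ (and $K>CMN$ implies the single-block requirement $K>CN$). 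Theorem~\ref{item:th1} then gives $\frac{1}{\parallel\mathcal{R}^m\parallel}\sum_{t_r\in\mathcal{R}^m}\mathbb{E}[F_m(\bar{\mathbf{y}}_{t_r})]-F_m(\mathbf{x}_m^*)\le O(1/\sqrt{NT'})=O(\sqrt{M}/\sqrt{NT})$, and the selection identity transfers this to $\mathbb{E}[F_m(\tilde{\mathbf{u}}_m)]-F_m(\mathbf{x}_m^*)$.

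The step I expect to be the main obstacle is making the per-block reduction of the block-separate chain fully rigorous. Two issues need care. First, the selection in Line~10 is performed on \emph{observed} local losses rather than the population losses $F_m$, so I must either treat the evaluation as exact (consistent with the population-loss convention used throughout and with Assumption~\ref{assum:no5}) or absorb the empirical-to-population gap into the $O(\cdot)$ error term. Second, I must check that the $C$ interruptions of the block-separate chain do not break the telescoping used in Lemmas~\ref{lemma:th1-1} and~\ref{lemma:th1-6}, i.e.\ that re-indexing the $T'=T/M$ active iterations of block $m$ as a single contiguous run preserves the local-deviation bound of Lemma~\ref{lemma:th1-3} and the communication-gap bound of Lemma~\ref{lemma:th1-6}. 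Once these bookkeeping points are settled, both rates follow by substituting $T'=T/M$ into the estimates already established for Theorem~\ref{item:th1}.
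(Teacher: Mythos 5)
Your proposal is correct and follows essentially the same route as the paper: a selection lemma showing $\mathbb{E}[F_m(\bar{\mathbf{u}}_{t_r})]$ is dominated by the loss of either chain, per-chain rates obtained by treating the block-mixed chain as an instance of Theorem~\ref{item:th1} and the block-separate chain as a non-cyclic single-block run of $T/M$ iterations with step size $\eta=\sqrt{N}/(L\sqrt{T/M})$, and a final combination via convexity of $F_m$ and the averaging formula~(\ref{eq:finalmodel2}). The two bookkeeping concerns you flag (empirical versus population losses in the selection step, and the resumption of the block-separate chain across cycles) are exactly the points the paper's Lemmas~\ref{lemma:th2-1} and~\ref{lemma:th2-2} address, in the same way you propose.
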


\begin{proof}
Please refer to Appendix \ref{th2:proof}.
\end{proof}

\section{Experiments}

In this section, we consider an image classification task and present the evaluation results of MM-PSGD and MC-PSGD. We note that although our two algorithms focus on the convex objective in the theoretical part, they still work very well for non-convex problems in practice.

{\bf Model and Dataset.} We take a convolutional neural network (CNN) from PyTorch's tutorial and use the public CIFAR-10 dataset. The CNN is formed by two convolutional layers and three fully connected layers with ReLU activation, max pooling, and softmax. In addition, the CIFAR-10 dataset consists of 10 classes of $32 \times 32$ images with three RGB channels. There are 50,000 and 10,000 images for training and testing, respectively. To simulate the block-cylic data in FL, we first partition both the training and test sets into $M$ heterogeneous blocks based on labels, where a block contains images of several labels, and different blocks may contain partially crossed labels. Each data block is further distributed among $N = 100$ clients in a non-i.i.d. and unbalanced way, where the local training sets on the clients are fetched from the block in sequence, and the sizes of the local training sets roughly follow the normal distribution with a mean of $50000/(MN)$ and a variance of $(10000/(MN))^2$. We implement our algorithms, test each block-specific predictor on the corresponding block's test set, and calculate an average of the test accuracies. 

{\bf Implementation Settings.} For all the experiments, we set the local training batch size to 2. For MM-PSGD, we use a learning rate of $\gamma=0.01$, and for MC-PSGD, we set both learning rates $\gamma = \eta = 0.01$ for the block-mixed and block-separate chains. As the default settings, we set the number of cycles $C$ to 10 and set the number of blocks $M$ to 5. Within each block, we set the total number of rounds $E$ to 200 and let each client run $I = 10$ local iterations of SGD. Furthermore, in our experiments, we observed that later models tend to have better accuracy. Thus, we empirically took the exponentially weighted average of the historical global model to obtain the final predictors, where the base is $0.5$, and the round number works as the exponent.

\textbf{Our Algorithms vs. FedAvg.} We compare MM-PSGD, MC-PSGD, with FedAvg over the block-cyclic data under the default settings. We also evaluate FedAvg over the shuffled data as an ideal baseline, where we directly distribute the randomly shuffled training data among all clients without the operation of block partition.  

We plot the evaluation results in Figure \ref{mainfig} and can observe that: (1) MM-PSGD and MC-PSGD achieve the best test accuracy of 65\%; (2) FedAvg with block-cyclic data does not converge at all, fluctuating between 56\% and 59\%; and (3) FedAvg with the shuffled data achieves the best test accuracy of 62\%, 3\% lower than our algorithms. We can also clearly observe that when the number of rounds is smaller than 3000, the test accuracies of our algorithms grow in a stair shape. This is because the performance of any block-specific predictor can be significantly improved in the first few cycles. 

\begin{table}[!t]
	\caption{Numbers of rounds required to achieve an accuracy of $60\%$ under different numbers of local iterations $I$'s.}\label{tb:expI}
	\begin{center}
		\resizebox{.9\columnwidth}{!}{
			\begin{tabular}[t]{lcccc}
				\toprule
				$I$ & 5 & 10 & 15 & 20\\
				\midrule
				MM-PSGD & 3,210 & 2,010 & 1,610 & 1,610\\
				MC-PSGD & 3,210 & 2,010 & 1,610 & 1,410\\		
				\bottomrule
			\end{tabular}
		}		
	\end{center}
	\vskip -0.1in
\end{table}

{\bf Number of Local Iterations.} We expect that our algorithms would converge in fewer rounds if we choose a larger number of local iterations $I$. This is because the convergence rate is $O(1/\sqrt{NT})$, where $T$ denotes the total number of local iterations, and the total number of rounds should be $T/I$. However, increasing $I$ will enlarge the bias between the local models and the global model, so the number of rounds needed for convergence may not consistently decrease inversely with $I$.
 
We evaluate MM-PSGD and MC-PSGD under different numbers of local iterations $I$'s, where $I$ increases from 5 to 20 with a step of 5. Figure \ref{fig-I} plots the test accuracies, and Table \ref{tb:expI} lists the number of rounds required to first achieve the test accuracy of 60\%. We can see that both MM-PSGD and MC-PSGD converge faster with a larger $I$, and MC-PSGD converges faster than MM-PSGD at $I=20$. These results conform to our expectation and analysis.
 
{\bf Number of Blocks.} We expect that the performance of our algorithms would not deteriorate as the number of blocks $M$ increase. We let $M$ take different values for comparison, while fixing the total number of rounds in each cycle at 1000 (i.e., $M \times E = 1000$). Figure \ref{fig-M} plots the results of our algorithms as well as two baselines of FedAvg with cyclic data and FedAvg with shuffled data.

From Figure~\ref{fig-M-fed}, we can see that the performance of FedAvg significantly deteriorates with a larger $M$. In particular, the best accuracy of FedAvg is 61\% at $M=2$. But at $M=10$, its accuracy fluctuates between 53\% and 57\%. From Figures \ref{fig-M-multi} and \ref{fig-M-2chain}, we can see that as $M$ increases, both of our algorithms perform even better. For MM-PSGD, it achieves the best test accuracies of 63\% and 67\% at $M=2$ and $M=10$, respectively. For MC-PSGD, its performance is more stable, achieving the best test accuracy between $64\%$ and $65\%$ regardless of $M$. We also observe some points with little fluctuation in Figure \ref{fig-M-2chain}, which is reasonable in non-convex optimization. Two chains for each block may converge to different local optimal models, averaging them may cause that the predictor is neither of the local optimal models.

\begin{figure}[!t]
\begin{center}
\subfigure[MM-PSGD]{\label{fig-frac-multi}
\includegraphics[width=0.75\columnwidth]{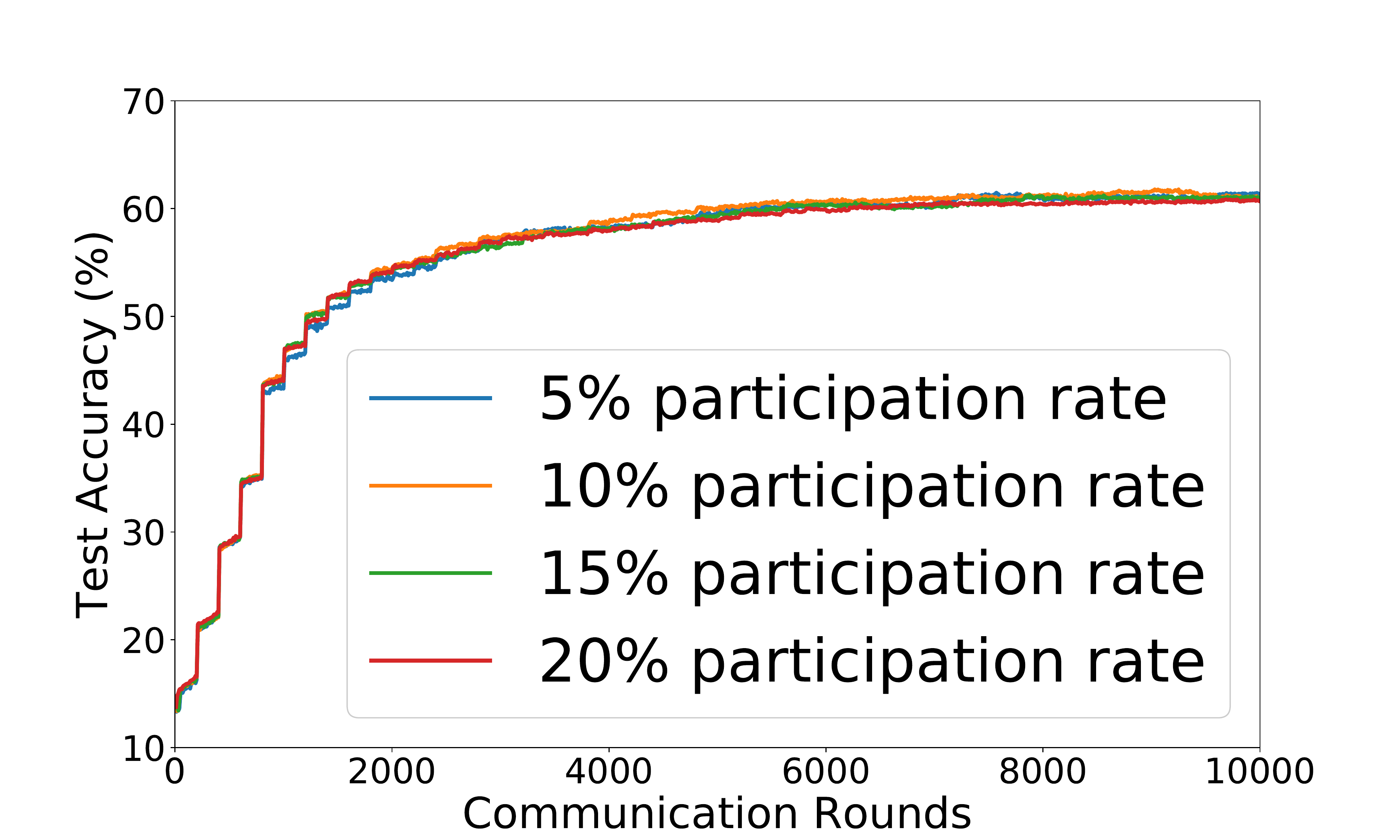}
}
\subfigure[MC-PSGD]{\label{fig-frac-2chain}
\includegraphics[width=0.75\columnwidth]{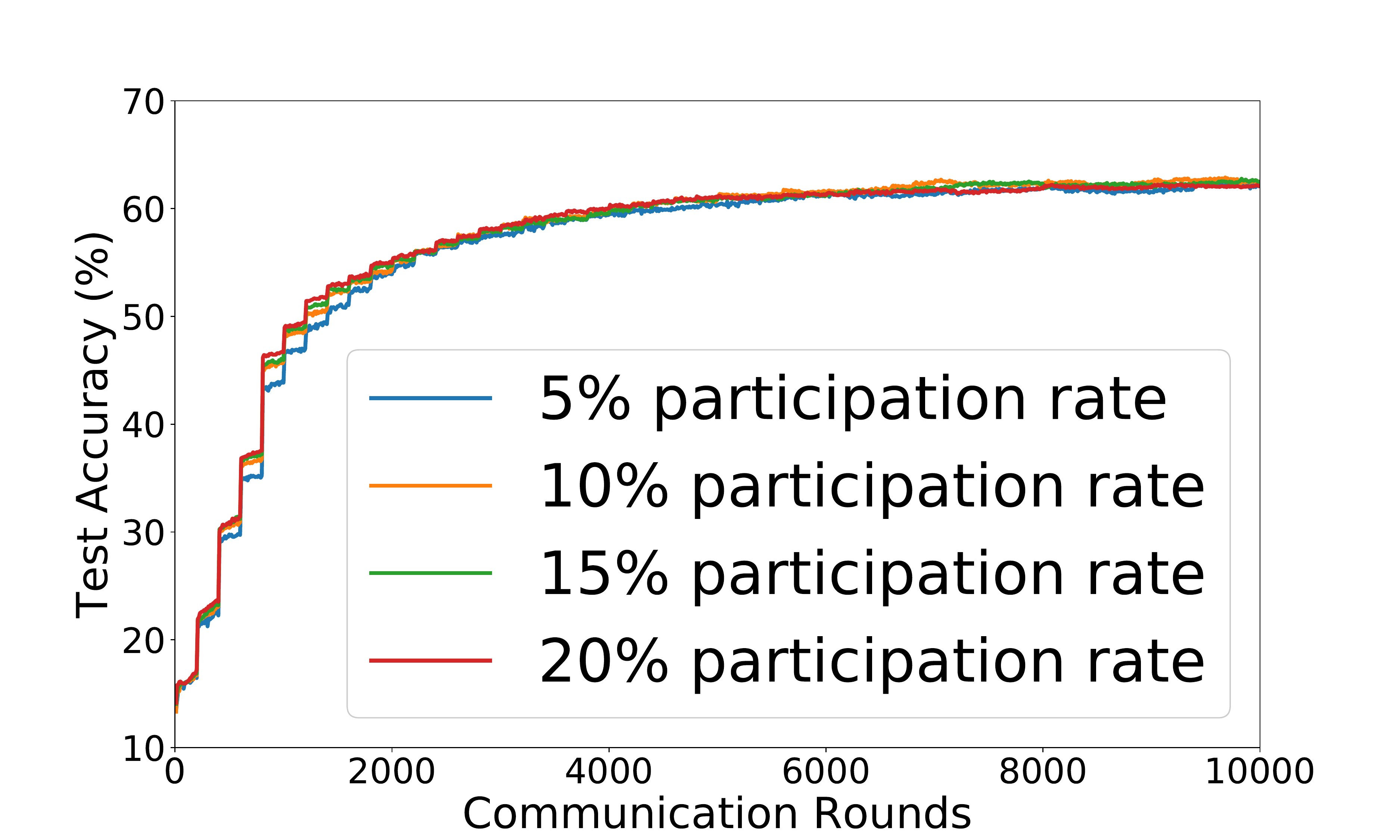}
}
\caption{Impact of the participation rate of clients.} \label{fig-frac}
\end{center}
\vskip -0.2in
\end{figure}

{\bf Participation Rate of Clients.} In the practical scenario of cross-device FL, only part of clients are chosen to participate in each round of collaborative training. In this set of simulations, we set the number of total clients to 1000 and randomly select a certain fraction of clients to participate in each round. Now, the clients in our algorithms denote those participating ones rather than the whole client pool. Figure~\ref{fig-frac} shows the evaluation results, when the participation rate increases from 5\% to 20\% with a step of 5\%. We can see that our two algorithms are both robust to the participation rate. In particular, the best test accuracies of MM-PSGD are between 60\% and 61\%, whereas the best test accuracies of MC-PSGD are higher, between 61\% and 62\%.

\section{Conclusion}
In this work, we considered unbalanced, non-i.i.d., and block-cyclic data distributions in FL. 
Such data characteristics would deteriorate the performance of conventional FL algorithms. To handle the problems introduced by cyclic data, we proposed MM-PSGD and MC-PSGD to obtain a series of block-specific predictors. Both MM-PSGD and MC-PSGD attain a convergence guarantee of $O(1/\sqrt{NT})$, achieving a linear speedup with respect to the number of clients. MC-PSGD can further guarantee that each block-specific predictor converges to the block's optimal model at a rate of $O(\sqrt{M}/\sqrt{NT})$, while adding an acceptable communication overhead. Empirical studies over the CIFAR-10 dataset demonstrate effectiveness and robustness.

\bibliography{cycle}
\bibliographystyle{icml2020}

\onecolumn
\appendix

\section{Proof of Theorem \ref{item:th1}}\label{th1-lemma}
\begin{proof}[\textbf{Proof of Lemma \ref{lemma:th1-1}}]
For any static model $\mathbf{x}$, we have
\begin{equation}
\begin{aligned}
\parallel \bar{\mathbf{x}}_{t+1}-\mathbf{x} \parallel^2 &=\parallel\bar{\mathbf{x}}_t - \gamma \mathbf{g}_t - \mathbf{x}\parallel^2\\
&=\parallel\bar{\mathbf{x}}_t-\gamma \bar{\mathbf{g}}_t - \mathbf{x}+\gamma \bar{\mathbf{g}}_t-\gamma \mathbf{g}_t\parallel^2 \\ 
&=\underbrace{\parallel\bar{\mathbf{x}}_t-\gamma \bar{\mathbf{g}}_t - \mathbf{x}\parallel^2}_{A_1}
+
\gamma^2\parallel \bar{\mathbf{g}}_t - \mathbf{g}_t \parallel^2
+
\underbrace{2\gamma\left<\bar{\mathbf{x}}_t-\gamma \bar{\mathbf{g}}_t - \mathbf{x},\bar{\mathbf{g}}_t-\mathbf{g}_t \right>}_{A_2}.
\end{aligned}\label{eq:s1-1}
\end{equation}
Note that $\mathbb{E}\left[A_2\right]=0$.
We next focus on bounding $A_1$ and split it in to three terms.
\begin{align}
A_1
=
\parallel\bar{\mathbf{x}}_t-\mathbf{x}\parallel^2
\underbrace{-2\gamma \left<\bar{\mathbf{x}}_t-\mathbf{x},\bar{\mathbf{g}}_t\right>}_{B_1}
+
\underbrace{\gamma^2\parallel\bar{\mathbf{g}}_t\parallel^2}_{B_2}. \label{eq:s1-2}
\end{align}
Note that 
\begin{equation}
\begin{aligned}
B_1 
&= -2\gamma\left<\bar{\mathbf{x}}_t-\mathbf{x},  \frac{1}{N}\sum_{i=1}^{N} \nabla F_m^i\left(\mathbf{x}_t^i\right)\right> \\
&= - \frac{2\gamma}{N}\sum_{i=1}^{N}\left<\bar{\mathbf{x}}_t-\mathbf{x},\nabla F_m^i\left(\mathbf{x}_t^i\right)\right> \\
&=- \frac{2\gamma}{N}\sum_{i=1}^{N}\left<\bar{\mathbf{x}}_t-\mathbf{x},\nabla F_m^i\left(\bar{\mathbf{x}}_t\right)+\nabla F_m^i\left(\mathbf{x}_t^i\right)-\nabla F_m^i\left(\bar{\mathbf{x}}_t\right)\right>\\
&= \frac{2\gamma}{N}\sum_{i=1}^{N}\left[
		\underbrace{-\left<\bar{\mathbf{x}}_t-\mathbf{x},\nabla F_m^i\left(\bar{\mathbf{x}}_t\right)\right>}_{C_1}
		\underbrace{-\left<\bar{\mathbf{x}}_t-\mathbf{x},\nabla F_m^i\left(\mathbf{x}_t^i\right)-\nabla F_m^i\left(\bar{\mathbf{x}}_t\right)\right>}_{C_2}
		\right].
\end{aligned}\label{eq:s1-3}
\end{equation}
where $m$ depends on $t$ given in equation (\ref{eq:sample}).

By the $\mu$-convexity of $F_m^i(\mathbf{x})$, we have
\begin{align}
C_1 \le
-\left(F_m^i\left(\bar{\mathbf{x}}_t\right)-F_m^i\left(\mathbf{x}\right)\right)-\frac{\mu}{2}\parallel\bar{\mathbf{x}}_t-\mathbf{x}\parallel^2.\label{eq:s1-4}
\end{align}
By the basic inequality $-2\left<{\bf a},{\bf b}\right>\le \alpha\parallel {\bf a}\parallel^2+\frac{1}{\alpha}\parallel {\bf b} \parallel^2\ (\alpha>0)$, we have
\begin{align}
C_2
\le
\frac{\mu}{2}\parallel\bar{\mathbf{x}}_t-\mathbf{x}\parallel^2+\frac{1}{2\mu}\parallel\nabla F_m^i\left(\mathbf{x}_t^i\right)-\nabla F_m^i\left(\bar{\mathbf{x}}_t\right)\parallel^2. \label{eq:s1-5}
\end{align}
Substituting equations (\ref{eq:s1-4}) and (\ref{eq:s1-5}) into (\ref{eq:s1-3}) and using the $L$-smoothness of $F_m^i(\mathbf{x})$, we have
\begin{equation}
\begin{aligned}
B_1
&\le \frac{2\gamma}{N}\sum_{i=1}^N \left[-\left(F_m^i\left(\bar{\mathbf{x}}_t\right)-F_m^i\left(\mathbf{x}\right)\right)
+\frac{1}{2\mu}\parallel\nabla F_m^i\left(\mathbf{x}_t^i\right)-\nabla F_m^i\left(\bar{\mathbf{x}}_t\right)\parallel^2 \right]\\
&\le -2\gamma\left(F_m\left(\bar{\mathbf{x}}_t\right)-F_m\left(\mathbf{x}\right)\right)+\frac{\gamma L^2}{\mu N}\sum_{i=1}^{N}\parallel\bar{\mathbf{x}}_t-\mathbf{x}_t^i\parallel^2 .
\end{aligned}\label{eq:s1-6}
\end{equation}
We next focus on bounding $B_2$: 
\begin{equation}
\begin{aligned}
B_2&=
\gamma^2 \parallel \frac{1}{N}\sum_{i=1}^{N}\nabla F_m^i\left(\mathbf{x}_t^i\right)\parallel ^2 \\
&=
\gamma^2 \parallel \frac{1}{N}\sum_{i=1}^{N}\left[\nabla F_m^i\left(\mathbf{x}_t^i\right)-\nabla F_m^i\left(\bar{\mathbf{x}}_t\right)+\nabla F_m^i\left(\bar{\mathbf{x}}_t\right)\right]\parallel^2  \\ 
&\overset{(a)}\le 2\gamma^2 \parallel \frac{1}{N}\sum_{i=1}^{N}\left[\nabla F_m^i\left(\mathbf{x}_t^i\right)-\nabla F_m^i\left(\bar{\mathbf{x}}_t\right)\right]\parallel ^2 + 2\gamma^2 \parallel \frac{1}{N}\sum_{i=1}^{N}\nabla F_m^i\left(\bar{\mathbf{x}}_t\right)\parallel^2 \\
&\overset{(b)}{\le} \frac{2\gamma^2 L^2}{N}\sum_{i=1}^{N}\parallel \bar{\mathbf{x}}_t-\mathbf{x}_t^i\parallel^2+2\gamma^2\parallel\nabla F_m(\bar{\mathbf{x}}_t)\parallel^2 ,
\end{aligned}\label{eq:s1-7}
\end{equation}
where (a) follows from $2\left<{\bf a},{\bf b}\right>\le \parallel {\bf a}\parallel^2+\parallel {\bf b} \parallel^2$, and (b) follows from the inequality $\parallel \sum_{i=1}^N {\bf z}_i\parallel^2 \le N\sum_{i=1}^N \parallel {\bf z}_i \parallel^2$ and the $L$-smoothness of $F_m^i(\mathbf{x})$.

By substituting equations (\ref{eq:s1-3}) and (\ref{eq:s1-7}) into (\ref{eq:s1-2}), we have
\begin{equation}
A_1
\le \parallel\bar{\mathbf{x}}_t-\mathbf{x}\parallel^2-2\gamma(F_m(\bar{\mathbf{x}}_t)-F_m(\mathbf{x}))
+\frac{\gamma L^2}{\mu N}\sum_{i=1}^{N}\parallel\bar{\mathbf{x}}_t-\mathbf{x}_t^i\parallel^2 
+\frac{2\gamma^2 L^2}{N}\sum_{i=1}^{N}\parallel \bar{\mathbf{x}}_t - \mathbf{x}_t^i \parallel^2
+2\gamma^2\parallel\nabla F_m\left(\bar{\mathbf{x}}_t\right)\parallel^2.\label{eq:s1-8}
\end{equation}
We substitute equation (\ref{eq:s1-8}) into (\ref{eq:s1-1}), let $\mathbf{x}=\mathbf{x}^*$, rerrange the terms, and finally take expectations on both sides
\begin{equation}
\begin{aligned}
\mathbb{E}\left[F_m(\bar{\mathbf{x}}_t)\right]-F_m(\mathbf{x}^*)
\le&\ \frac{1}{2\gamma}\left(\mathbb{E}\left[\parallel\bar{\mathbf{x}}_t-\mathbf{x}^*\parallel^2\right]-\mathbb{E}\left[\parallel\bar{\mathbf{x}}_{t+1}-\mathbf{x}^*\parallel^2\right]\right) 
+\frac{\gamma}{2}\mathbb{E}\left[\parallel \mathbf{g}_t-\bar{\mathbf{g}}_t\parallel^2\right]\\
&\ +\frac{L^2}{2\mu N}\sum_{i=1}^{N}\mathbb{E}\left[\parallel\bar{\mathbf{x}}_t-\mathbf{x}_t^i\parallel^2\right]
+\frac{\gamma L^2}{N}\sum_{i=1}^{N}\mathbb{E}\left[\parallel \bar{\mathbf{x}}_t- \mathbf{x}_t^i\parallel^2\right]+\gamma\mathbb{E}\left[\parallel\nabla F_m(\bar{\mathbf{x}}_t)\parallel^2\right].
\end{aligned}\label{eq:s1-9}
\end{equation}
\end{proof}

\begin{proof}[\textbf{Proof of Lemma \ref{lemma:th1-2}}]
From Assumption \ref{assum:no3}, the variance of stochastic gradient is bounded by $\sigma^2$, then
\begin{equation}
\begin{aligned}
\mathbb{E} \left[\parallel \mathbf{g}_t - \bar{\mathbf{g}}_t \parallel^2\right]
&= \mathbb{E} \left[\parallel\frac{1}{N}\sum_{i=1}^{N}\left(\nabla f\left(\mathbf{x}_t^i,\xi_t^i \right)-\nabla F_m^i\left(\mathbf{x}_t^i\right)\right)\parallel^2\right] \\
&\overset{(a)}=
\frac{1}{N^2}\sum_{i=1}^{N}\mathbb{E}\left[\parallel \nabla f\left(\mathbf{x}_t^i,\xi_t^i\right)-\nabla F_m^i\left(\mathbf{x}_t^i\right)\parallel^2 \right]
\le \frac{\sigma^2}{N},
\end{aligned}\label{eq:s2-1}
\end{equation}
where (a) follows from that $\nabla f\left(\mathbf{x}_t^i,\xi_t^i \right)-\nabla F_m^i\left(\mathbf{x}_t^i\right)$ has mean ${0}$ and is independent across clients.
\end{proof}

\begin{proof}[\textbf{Proof of Lemma \ref{lemma:th1-3}}]
For any $0<t\le T$, there exists a largest $t_0\le t$ such that $\bar{\mathbf{x}}_{t_0}=\mathbf{x}_{t_0}^i$ for $i=1,2,\cdots,N$. Then, we have
\begin{equation}
\begin{aligned}
\mathbb{E}\left[\parallel \bar{\mathbf{x}}_t - \mathbf{x}_t^i\parallel^2\right]
&= \mathbb{E}\left[\parallel \gamma \sum_{\tau=t_0}^{t-1} \frac{1}{N}\sum_{i=1}^{N}\mathbf{g}_t^i-\gamma  \sum_{\tau=t_0}^{t-1}\mathbf{g}_t^i \parallel ^2\right]\\
&= \gamma^2 \mathbb{E}\left[\parallel\sum_{\tau=t_0}^{t-1} \frac{1}{N}\sum_{i=1}^{N}\mathbf{g}_t^i-\sum_{\tau=t_0}^{t-1}\mathbf{g}_t^i\parallel^2\right]\\
&\overset{(a)}\le 2\gamma^2 \mathbb{E}\left[\parallel\sum_{\tau=t_0}^{t-1} \frac{1}{N}\sum_{i=1}^{N}\mathbf{g}_t^i\parallel^2+\parallel\sum_{\tau=t_0}^{t-1}\mathbf{g}_t^i\parallel^2\right]\\
&\overset{(b)}\le 2\gamma^2\left(t-t_0\right)\mathbb{E}\left[\sum_{\tau=t_0}^{t-1}\parallel\frac{1}{N}\sum_{i=1}^{N}\mathbf{g}_t^i\parallel^2+\sum_{\tau=t_0}^{t-1}\parallel \mathbf{g}_t^i\parallel^2\right]\\
&\overset{(c)} \le 2\gamma^2\left(t-t_0\right)\mathbb{E}\left[\sum_{\tau=t_0}^{t-1}\left(\frac{1}{N}\sum_{i=1}^{N}\parallel\mathbf{g}_t^i\parallel^2\right)+\sum_{\tau=t_0}^{t-1}\parallel \mathbf{g}_t^i\parallel^2\right]\\
&\overset{(d)}\le 4\gamma^2I^2G^2 .
\end{aligned}\label{eq:s3-1}
\end{equation}
where (a)-(c) follow from the inequality $\parallel \sum_{i=1}^N {\bf z}_i\parallel^2 \le N\sum_{i=1}^N \parallel {\bf z}_i \parallel^2$, and (d) follows from Assumption \ref{assum:no4}.
\end{proof}

\begin{proof}[\textbf{Proof of Lemma \ref{lemma:th1-5}}]
We focus on bounding the average of gradients. Although the sampling is block-cyclic, when we focus on a certain block $m$ in a certain cycle $c$, it is equal to the non-cyclic case but with only $K$ iterations.

For block $m$ in cycle $c$ , we let $t_c^m$ denote the start of that block. According to \citet{DBLP:conf/aaai/Yu19}, we have
\begin{equation}
\begin{aligned}
\frac{1}{K}\sum_{t=t_c^m}^{t_c^m+K-1} \mathbb{E}\left[\parallel\nabla F_m\left(\bar{\mathbf{x}}_t\right)\parallel^2 \right]
\le &\frac{2\left(F_m\left(\bar{\mathbf{x}}_{t_c^m}\right)-F_m\left(\bar{\mathbf{x}}_{t_c^m+K}\right)\right)}{\gamma K}+4\gamma ^2I^2G^2L^2+\frac{L}{N}\gamma \sigma^2 \\
\overset{(a)}{\le} &\frac{4\left(B^2L^2 + B^2 + B^2L\right)}{\gamma K}+4\gamma ^2I^2G^2L^2+\frac{L}{N}\gamma \sigma^2,
\end{aligned}\label{eq:s5-1}
\end{equation}
where (a) follows from Lemma \ref{lemma:s1-1}.
So the average $l_2$-norm of gradients should be bounded by:
\begin{equation}
\begin{aligned}
\frac{1}{T}\sum_{t=1}^T \mathbb{E}\left[\parallel\nabla F_{m}(\bar{\mathbf{x}}_t)\parallel^2\right]
&= \frac{1}{CM}\sum_{c,m}\frac{1}{K}\sum_{t=t_c^m}^{t_c^m+K-1} \mathbb{E}\left[\parallel \nabla F_m(\bar{\mathbf{x}}_t)\parallel^2\right]\\
&\le \frac{4\left(B^2L^2 + B^2 + B^2L\right)}{\gamma K}+4\gamma ^2I^2G^2L^2+\frac{L}{N}\gamma \sigma^2.
\end{aligned}\label{eq:s5-1.5}
\end{equation}

If $\gamma=\frac{\sqrt{N}}{L\sqrt{T}}\le 1$ and $I\le T^{\frac{1}{4}}/N^{\frac{3}{4}}$, we have
\begin{align}
\frac{1}{T}\sum_{t=1}^T \mathbb{E}\left[\parallel\nabla F_{m}(\bar{\mathbf{x}}_t)\parallel^2\right]
\le O\left(\frac{\sqrt{T}}{\sqrt{N}K}\right).\label{eq:s5-2}
\end{align}
\end{proof}

\begin{proof}[\textbf{Proof of Lemma \ref{lemma:th1-4}}]
We sum over $t\in \{1,2, \cdots , T\}$ for equation (\ref{eq:s1-9}), divide $T$ on both sides, and substitute equations (\ref{eq:s2-1}) and (\ref{eq:s3-1}). Then, we have
\begin{equation}
\begin{aligned}
\frac{1}{T} \sum_{t=1}^{T} \mathbb{E} \left[F_{m}(\bar{\mathbf{x}}_t)-F_{m}\left(\mathbf{x}^*\right)\right]
\le&
\frac{\mathbb{E}\left[\parallel \bar{\mathbf{x}}_1-\mathbf{x}^* \parallel ^2\right]-\mathbb{E}\left[\parallel \bar{\mathbf{x}}_{T+1}-\mathbf{x}^*\parallel ^2\right]}{2\gamma T} 
+ \frac{\gamma \sigma^2}{2N} 
+ \frac{2\gamma^2I^2G^2L^2}{\mu} \\
&+ 4\gamma^3I^2G^2L^2 
+ \frac{\gamma}{T} \sum_{t=1}^T \mathbb{E}\left[\parallel\nabla F_m(\bar{\mathbf{x}}_t)\parallel^2\right].
\end{aligned}\label{eq:s4-1}
\end{equation}
Under Assumption \ref{assum:no5}, we have
\begin{align}
\frac{\mathbb{E}\left[\parallel \bar{\mathbf{x}}_1-\mathbf{x}^* \parallel ^2\right]-\mathbb{E}\left[\parallel \bar{\mathbf{x}}_{T+1}-\mathbf{x}^*\parallel ^2\right]}{2\gamma T} 
\le
\frac{2B^2}{\gamma T}.\label{eq:s4-2}
\end{align}
According to Lemma \ref{lemma:th1-5}, we have 
\begin{align}
\frac{1}{T} \sum_{t=1}^{T} \mathbb{E} \left[F_{m}(\bar{\mathbf{x}}_t)-F_{m}\left(\mathbf{x}^*\right)\right]
\le&
\frac{2B^2}{\gamma T} 
+ \frac{\gamma \sigma^2}{2N}
+ \frac{2\gamma^2I^2G^2L^2}{\mu}
+ 4\gamma^3I^2G^2L^2
+ O\left(\frac{\gamma \sqrt{T}}{\sqrt{N}K}\right).\label{eq:s4-3}
\end{align}
\end{proof}

\begin{proof}[\textbf{Proof of Lemma \ref{lemma:th1-6}}]
To obtain the required result, we need to bound the gap between the loss of rounds and the loss of iterations.
From the $L$-smoothness of the functions $F_m^i(\mathbf{x})$ and Assumption \ref{assum:no2}, we have
\begin{align}
\mathbb{E}\left[F_m\left(\bar{\mathbf{x}}_t\right)-F_m\left(\bar{\mathbf{x}}_{t-1}\right)\right]
\le
\underbrace{\mathbb{E}\left[\left<\bar{\mathbf{x}}_t-\bar{\mathbf{x}}_{t-1},\nabla F_m\left(\bar{\mathbf{x}}_{t-1}\right)\right>\right]}_{D_1}
+
\frac{L}{2}\underbrace{\mathbb{E}\left[\parallel\bar{\mathbf{x}}_t-\bar{\mathbf{x}}_{t-1}\parallel ^2\right]}_{D_2}.\label{eq:s6-1}
\end{align}
We first focus on $D_2$. Under Assumption \ref{assum:no3}, we have
\begin{equation}
\begin{aligned}
\mathbb{E}\left[\parallel\bar{\mathbf{x}}_{t}-\bar{\mathbf{x}}_{t-1}\parallel^2\right]
&= \gamma^2 \mathbb{E}\left[\parallel\frac{1}{N}\sum_{i=1}^{N}\mathbf{g}_{t-1}^i \parallel^2\right]\\
&\overset{(a)}= \gamma^2 \mathbb{E}\left[\parallel\frac{1}{N}\sum_{i=1}^{N}\left(\mathbf{g}_{t-1}^i-\nabla F_m^i\left(\mathbf{x}_{t-1}^i\right)\right)\parallel^2\right]
+\gamma^2\mathbb{E}\left[\parallel\frac{1}{N}\sum_{i=1}^N\nabla F_m^i\left(\mathbf{x}_{t-1}^i\right)\parallel^2\right]\\
&\overset{(b)}= \frac{\gamma^2}{N^2} \sum_{i=1}^{N}\mathbb{E}\left[\parallel\mathbf{g}_{t-1}^i-\nabla F_m^i\left(\mathbf{x}_{t-1}^i\right)\parallel^2\right]
+\gamma^2\mathbb{E}\left[\parallel\frac{1}{N}\sum_{i=1}^N\nabla F_m^i\left(\mathbf{x}_{t-1}^i\right)\parallel^2\right]\\
&\le \frac{\gamma^2 \sigma^2}{N} +\gamma^2 \mathbb{E}\left[\parallel\frac{1}{N} \sum_{i=1}^N  \nabla F_m^i\left(\mathbf{x}_{t-1}^i\right)\parallel^2\right],
\end{aligned}\label{eq:s6-2}
\end{equation}
where (a) follows from $\mathbb{E}\left[\mathbf{g}_{t-1}^i\right]=\nabla F_m^i\left({\bf x}_{t-1}^i\right)$ and the basic inequality $\mathbb{E}\left[\parallel \mathbf{z} \parallel^2\right]=\mathbb{E}\left[\parallel \mathbf{z}-\mathbb{E}\left[\mathbf{z}\right] \parallel^2 \right]+\parallel\mathbb{E}\left[\mathbf{z}\right]\parallel^2$ for any vector $\mathbf{z}$,
and (b) follows from that $\mathbf{g}_{t-1}^i-\nabla F_m^i\left({\bf x}_{t-1}^i\right)$ has mean $0$ and is independent across clients.

We then focus on $D_1$:
\begin{equation}
\begin{aligned}
&\mathbb{E}\left[\left<\bar{\mathbf{x}}_t-\bar{\mathbf{x}}_{t-1},\nabla F_m\left(\bar{\mathbf{x}}_{t-1}\right)\right>\right] \\
=&-\gamma \mathbb{E}\left<\frac{1}{N}\sum_{i=1}^{N}\nabla F_m^i\left(\mathbf{x}_{t-1}^i\right),\nabla F_m\left(\bar{\mathbf{x}}_{t-1}\right)\right>\\
=&-\frac{\gamma}{2}\mathbb{E}\left[\parallel\frac{1}{N}\sum_{i=1}^{N}\nabla F_m^i\left(\mathbf{x}_{t-1}^i\right)\parallel^2 + \parallel\nabla F_m\left(\bar{\mathbf{x}}_{t-1}\right)\parallel^2 \right]
+\frac{\gamma}{2}
\underbrace{\mathbb{E}\left[\parallel\nabla F_m\left(\bar{\mathbf{x}}_{t-1}\right)-\frac{1}{N}\sum_{i=1}^{N}\nabla F_m^i\left(\mathbf{x}_{t-1}^i\right)\parallel^2 \right]}_{E}.
\end{aligned}\label{eq:s6-3}
\end{equation}

Under Assumptions \ref{assum:no2} and \ref{assum:no4} and Lemma \ref{lemma:th1-3}, we bound $E$:
\begin{equation}
\begin{aligned}
E &= \mathbb{E}\left[\parallel \frac{1}{N}\sum_{i=1}^{N}\nabla F_m\left(\bar{\mathbf{x}}_{t-1}\right)-\frac{1}{N}\sum_{i=1}^{N}\nabla F_m^i\left(\mathbf{x}_{t-1}^i\right)\parallel^2\right]\\
    &= \frac{1}{N^2}\mathbb{E}\left[\parallel\sum_{i=1}^{N}\left(\nabla F_m\left(\bar{\mathbf{x}}_{t-1}\right)-\nabla F_m^i\left(\mathbf{x}_{t-1}^i\right)\right)\parallel^2\right]\\
    &\le \frac{1}{N}\mathbb{E}\left[\sum_{i=1}^{N}\parallel \nabla F_m\left(\bar{\mathbf{x}}_{t-1}\right)-\nabla F_m^i\left(\mathbf{x}_{t-1}^i\right)\parallel^2\right] \\
    &\le \frac{L^2}{N}\sum_{i=1}^{N}\mathbb{E}\left[\parallel\bar{\mathbf{x}}_{t-1}-{\mathbf{x}}_{t-1}^i\parallel^2\right] \\
    &\le 4\gamma^2I^2G^2L^2.
\end{aligned}\label{eq:s6-4}
\end{equation}

Since $\gamma \leq 1/L$, substituting equations (\ref{eq:s6-2}), (\ref{eq:s6-3}), and (\ref{eq:s6-4}) into (\ref{eq:s6-1}) yields
\begin{align}
&\mathbb{E}\left[F_m\left(\bar{\mathbf{x}}_t\right)-F_m\left(\bar{\mathbf{x}}_{t-1}\right)\right]
\le
2\gamma^3I^2G^2L^2+\frac{\gamma^2\sigma^2L }{2N}.\label{eq:s6-5}
\end{align}

For all $t$, we consider the least $t_c$ such that $t_c$ is a communication iteration and $t_c \ge t$ and have 
\begin{equation}
\begin{aligned}
\mathbb{E}\left[F_m\left(\bar{\mathbf{x}}_{t_c}\right)-F_m\left(\bar{\mathbf{x}}_{t}\right)\right]
&=\sum_{\tau=t}^{t_c-1}\mathbb{E}\left[F_m\left(\bar{\mathbf{x}}_{\tau+1}\right)-F_m\left(\bar{\mathbf{x}}_{\tau}\right)\right]\\
&\le
2\gamma^3I^3G^2L^2+\frac{\gamma^2\sigma^2IL }{2N}.
\end{aligned}\label{eq:s6-6}
\end{equation}

We reuse $t_c^m$ to denote the start of block $m$ in cycle $c$. Then, we define a sequence $\{\bar{\mathbf{x}}_t'\}$, where $\bar{\mathbf{x}}_t'=\bar{\mathbf{x}}_{t_c^m}$. We have 
\begin{equation}
\begin{aligned}
&\frac{1}{\parallel \mathcal{R}_c^m \parallel}\sum_{t_r\in  \mathcal{R}_c^m}\mathbb{E}\left[F_m\left(\bar{\mathbf{x}}_{t_r}\right)\right]-\frac{1}{K}\sum_{t=t_c^m}^{t_c^m+K-1}\mathbb{E}\left[F_m\left(\bar{\mathbf{x}}_{t}\right)\right]\\
=\ &\frac{1}{K}\sum_{t=t_c^m}^{t_c^m+K-1}\mathbb{E}\left[F_m\left(\bar{\mathbf{x}}_{t}' \right)-F_m\left(\bar{\mathbf{x}}_{t}\right)\right]
\le
2\gamma^3I^3G^2L^2+\frac{\gamma^2\sigma^2IL }{2N} ,
\end{aligned}\label{eq:s6-7}
\end{equation}
where $\mathcal{R}_c^m$ is the set of communication iterations in block $m$ from cycle $c$.

Then we bound the gap between the average loss of communication iterations and that of all the iterations:
\begin{equation}
\begin{aligned}
\frac{1}{\parallel \mathcal{R}\parallel}\sum_{t_r \in \mathcal{R}}F_m(\bar{\mathbf{x}}_{t_r}) - \frac{1}{T}\sum_{t=1}^T F_m\left(\bar{\mathbf{x}}_t\right)
&= \frac{1}{CM}\sum_{c,m}\left[\frac{1}{\parallel \mathcal{R}_c^m \parallel}\sum_{t_r\in  \mathcal{R}_c^m}\mathbb{E}\left[F_m\left(\bar{\mathbf{x}}_{t_r}\right)\right]-\frac{1}{K}\sum_{t=t_c^m}^{t_c^m+K-1}\mathbb{E}\left[F_m\left(\bar{\mathbf{x}}_{t}\right)\right]\right]\\
&\le 2\gamma^3I^3G^2L^2+\frac{\gamma^2\sigma^2IL }{2N}.
\end{aligned}\label{eq:s6-8}
\end{equation}
\end{proof}

\section{Proof of Theorem \ref{item:th2}} \label{th2:proof}
\subsection{Proof Sketch} \label{th2-sketch}
\begin{proof}[\textbf{Proof Sketch for Theorem \ref{item:th2}}]
We first separately bound the mini-max optimal errors for the models from two chains.
\begin{lemma}\label{lemma:th2-1}
By choosing $\gamma=\frac{\sqrt{N}}{L\sqrt{T}}$, $\eta=\frac{\sqrt{NM}}{L\sqrt{T}}$, and $I\le \frac{\left(T/M\right)^\frac{1}{4}}{N^{\frac{3}{4}}}$, when $K>CMN$, for $\left(M+1\right)$ chains, we have the following guarantee for the model from each chain:

The block-mixed model can guarantee
\begin{align}
\frac{1}{\parallel \mathcal{R}\parallel}\sum_{t_r\in \mathcal{R}}\mathbb{E} \left[F_m\left(\bar{\mathbf{x}}_{t_r}\right)\right]-F\left(\mathbf{x}^*\right)\le O\left(\frac{1}{\sqrt{NT}}\right).
\end{align}
For any block $m$, the block-separate model can guarantee
\begin{align}
\frac{1}{\parallel \mathcal{R}^m\parallel} \sum_{t_r\in \mathcal{R}^m} \mathbb{E}\left[F_m\left(\bar{\mathbf{y}}_{t_r}\right)\right]-F_m\left(\mathbf{x}_m^*\right)\le O\left(\frac{\sqrt{M}}{\sqrt{NT}}\right).
\end{align}
\end{lemma}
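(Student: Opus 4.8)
The plan is to prove both inequalities of Lemma~\ref{lemma:th2-1} by reducing each chain to the single-block machinery already assembled for Theorem~\ref{item:th1}, rather than re-deriving anything from scratch.

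For the block-mixed chain, I would first note that the updates of $\bar{\mathbf{x}}_t$ in MC-PSGD (the aggregation $\bar{\mathbf{x}}_t=\frac{1}{N}\sum_{i=1}^N\mathbf{x}_t^i$ together with the local steps $\mathbf{x}_{t+1}^i=\mathbf{x}_t^i-\gamma\mathbf{g}_t^i$) are structurally identical to those of MM-PSGD, and the block-cyclic sampling is the same. Hence the entire chain of Lemmas~\ref{lemma:th1-1}--\ref{lemma:th1-6} applies verbatim with the same $\gamma=\frac{\sqrt{N}}{L\sqrt{T}}$. The only difference is that here $I\le\frac{(T/M)^{1/4}}{N^{3/4}}$, which is \emph{stronger} than the constraint $I\le T^{1/4}/N^{3/4}$ of Theorem~\ref{item:th1} whenever $M\ge 1$, so every step remains valid. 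This gives the first bound, $\frac{1}{\parallel\mathcal{R}\parallel}\sum_{t_r\in\mathcal{R}}\mathbb{E}\left[F_m(\bar{\mathbf{x}}_{t_r})\right]-F(\mathbf{x}^*)\le O(1/\sqrt{NT})$, with essentially no new work.

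For the block-separate chain, the key observation is that the checkpoint mechanism of Algorithm~\ref{alg:alg2}, namely the update $\bar{\mathbf{w}}_m\gets\bar{\mathbf{y}}_t$ at each communication iteration of block $m$ and the reset $\mathbf{y}_t^i\gets\bar{\mathbf{w}}_{m_1}$ at every block boundary, makes the per-block chain behave exactly like a \emph{non-cyclic} single-block problem. For a fixed $m$, the model $\bar{\mathbf{y}}_t$ is trained only on $D_m$ while the algorithm is inside block $m$; its state is saved into $\bar{\mathbf{w}}_m$ on exit and restored on re-entry in the next cycle. Thus the trajectory of $\bar{\mathbf{y}}$ restricted to block-$m$ iterations is precisely that of a single continuous chain of length $T_m=CK=T/M$ driven by gradients of $F_m^i$ alone. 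I would therefore replay the Theorem~\ref{item:th1} argument for this auxiliary chain with the substitutions $M\leftarrow 1$, $T\leftarrow T/M$, $\gamma\leftarrow\eta$. These are exactly consistent with the stated hyperparameters: $\eta=\frac{\sqrt{NM}}{L\sqrt{T}}=\frac{\sqrt{N}}{L\sqrt{T/M}}$ plays the role of $\frac{\sqrt{N}}{L\sqrt{T_m}}$, and $I\le\frac{(T/M)^{1/4}}{N^{3/4}}=\frac{T_m^{1/4}}{N^{3/4}}$ is precisely the Theorem~\ref{item:th1} constraint with $T_m$ in place of $T$. Replaying Lemmas~\ref{lemma:th1-1}--\ref{lemma:th1-6} then yields a bound of order $O(1/\sqrt{NT_m})=O(\sqrt{M}/\sqrt{NT})$, which is the second inequality. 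I would also verify that $K>CMN$ supplies the condition needed to absorb the gradient-average term: the analogue of Lemma~\ref{lemma:th1-5} for the auxiliary chain needs $K\gtrsim\sqrt{NT_m}=\sqrt{NCK}$, i.e.\ $K\ge NC$, which is implied by $K>CMN$ for $M\ge 1$.

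The main obstacle I anticipate is making this ``pause-and-resume'' reduction airtight rather than merely intuitive, i.e.\ confirming that each lemma of Theorem~\ref{item:th1} survives the interleaving with other blocks. The variance bound (Lemma~\ref{lemma:th1-2}) and the gradient-norm control transfer immediately, since they are per-iteration statements about $D_m^i$. The delicate one is the local-deviation bound (Lemma~\ref{lemma:th1-3}), whose proof required, before each iteration, a recent synchronization point where all local models equal the global model. For the block-separate chain this still holds, because synchronization occurs both at ordinary communication iterations \emph{and} at every block boundary, where the reset $\mathbf{y}_t^i\gets\bar{\mathbf{w}}_{m_1}$ forces all local models to agree; hence $t-t_0$ remains at most $I$ and the $4\eta^2I^2G^2$ bound carries over with $\gamma$ replaced by $\eta$. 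Once these transfers are checked, both inequalities follow and the lemma is complete.
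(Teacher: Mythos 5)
Your proposal is correct and follows essentially the same route as the paper: the block-mixed bound is obtained by invoking Theorem~\ref{item:th1} directly, and the block-separate chain is treated as a non-cyclic single-block problem with $T/M$ iterations and learning rate $\eta$, whose all-iteration bound is then converted to communication iterations via Lemma~\ref{lemma:th1-6}. Your extra checks (that the reset at block boundaries preserves the local-deviation bound, and that $K>CMN$ supplies the needed condition for the auxiliary chain) are details the paper leaves implicit, but the argument is the same.
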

The result for the block-mixed chain directly comes from Theorem \ref{item:th1}. We next show the average loss of the chosen model is not more than that of the model from either chain.
\begin{lemma}\label{lemma:th2-2}
Based on the model selection principle for each iteration in MC-PSGD, we have
\begin{align}
&\frac{1}{\parallel \mathcal{R}\parallel}\sum_{t_r \in \mathcal{R}}\mathbb{E} \left[F_m\left(\bar{\mathbf{u}}_{t_r}\right)\right] \le \frac{1}{\parallel \mathcal{R}\parallel}\sum_{t_r\in R}\mathbb{E}\left[F_m\left(\bar{\mathbf{x}}_{t_r}\right)\right].
\end{align}
In addition,  
\begin{align}
&\frac{1}{\parallel \mathcal{R}^m\parallel}\sum_{t_r \in \mathcal{R}^m}\mathbb{E}\left[F_m\left(\bar{\mathbf{u}}_{t_r}\right)\right]\le \frac{1}{\parallel \mathcal{R}^m\parallel}\sum_{t_r \in \mathcal{R}^m}\mathbb{E}\left[F_m\left(\bar{\mathbf{y}}_{t_r}\right)\right],  \ \ \ \forall m \in \{1,2 \cdots, M \}.
\end{align}
\end{lemma}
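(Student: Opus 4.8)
The plan is to reduce both displayed inequalities to a single per-iteration comparison that follows directly from the model-selection rule (Line 10 of Algorithm \ref{alg:alg2}), and then to lift that comparison to the averaged expected losses by monotonicity of expectation together with linearity of the averaging operator. Since the right-hand side for the block-mixed chain has already been controlled through Lemma \ref{lemma:th2-1} (inheriting the $O(1/\sqrt{NT})$ bound of Theorem \ref{item:th1}), all that Lemma \ref{lemma:th2-2} must supply is that selecting the ``better'' candidate never increases the averaged loss relative to \emph{either} chain.

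First I would make the selection criterion explicit. At a communication iteration $t_r$ lying in block $m$, every client reports the local loss of the two candidates $\bar{\mathbf{x}}_{t_r}$ and $\bar{\mathbf{y}}_{t_r}$ on its own block-$m$ data, and the server averages these reports over the $N$ clients. Because $F_m(\mathbf{z})=\frac{1}{N}\sum_{i=1}^N F_m^i(\mathbf{z})$ and each $F_m^i$ is exactly the expected per-client loss on $D_m^i$, the average local loss of a candidate $\mathbf{z}$ is precisely $F_m(\mathbf{z})$. Hence the rule ``$\bar{\mathbf{u}}_{t_r}$ is the candidate with the smaller average local loss'' is equivalent to
\[
F_m(\bar{\mathbf{u}}_{t_r}) = \min\{F_m(\bar{\mathbf{x}}_{t_r}),\, F_m(\bar{\mathbf{y}}_{t_r})\},
\]
which yields the two pointwise (almost-sure) inequalities $F_m(\bar{\mathbf{u}}_{t_r}) \le F_m(\bar{\mathbf{x}}_{t_r})$ and $F_m(\bar{\mathbf{u}}_{t_r}) \le F_m(\bar{\mathbf{y}}_{t_r})$, with $m=m(t_r)$ the block of $t_r$. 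I would then take expectations, which preserve these inequalities, and sum: for the first claim over all $t_r\in\mathcal{R}$ dividing by $\parallel\mathcal{R}\parallel$ (with $m$ tracking $t_r$, consistent with the convention that the block index depends on $t_r$), and for the second claim fixing a block $m$ so that every $t_r\in\mathcal{R}^m$ shares that block, summing over $\mathcal{R}^m$ and dividing by $\parallel\mathcal{R}^m\parallel$. Both displayed inequalities then read off immediately.

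The main obstacle is conceptual rather than computational: justifying that the empirical average local loss the server uses to pick $\bar{\mathbf{u}}_{t_r}$ can be identified with the population block objective $F_m$. In the idealized analysis I treat the reported per-client losses as the exact expected losses $F_m^i$, so the selection is made against $F_m$ itself and the comparison holds deterministically for each realization; if one instead modeled the reports as finite-sample estimates, an extra generalization term would intrude and the clean pointwise inequality would hold only up to that estimation error. I would also stress that the $\min$-form must be invoked per block, since $\bar{\mathbf{u}}_{t_r}$ is selected against $F_{m(t_r)}$ rather than a single fixed objective: in the first (block-mixed) averaging each summand must be compared under its own $F_{m(t_r)}$, which is exactly what summation over $\mathcal{R}=\bigcup_{m=1}^M\mathcal{R}^m$ respects, whereas the second averaging is already confined to a single $\mathcal{R}^m$ and needs no such bookkeeping.
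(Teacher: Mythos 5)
Your proposal is correct and follows essentially the same route as the paper: a per-iteration comparison $F_m(\bar{\mathbf{u}}_{t_r})\le\min\{F_m(\bar{\mathbf{x}}_{t_r}),F_m(\bar{\mathbf{y}}_{t_r})\}$ coming from the selection rule, lifted to the averaged bounds by expectation and summation over $\mathcal{R}$ and $\mathcal{R}^m$. The only difference is cosmetic: the paper states the comparison for the realized single-sample losses $\frac{1}{N}\sum_i f(\cdot,\xi_{t_r}^i)$ and then takes expectations, whereas you identify the reported losses with the population objectives $F_m^i$ up front --- you are in fact more explicit than the paper about the idealization this step requires.
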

By the above lemmas, we finally bound the average loss of the final predictors.
\begin{equation}
\begin{aligned}
\frac{1}{M}\sum_{m=1}^{M}\mathbb{E}\left[F_m\left(\tilde{\mathbf{u}}_m\right)\right]
&\le \frac{1}{M}\sum_{m=1}^{M}\frac{1}{\parallel \mathcal{R}^m \parallel}\sum_{t_r \in \mathcal{R}^m}\mathbb{E}\left[F_m(\bar{\mathbf{u}}_{t_r})\right]\\
&\le \frac{1}{\parallel \mathcal{R}\parallel}\sum_{t_r \in \mathcal{R}}\mathbb{E}\left[F_m\left(\bar{\mathbf{x}}_{t_r}\right)\right]
\le F\left(\mathbf{x}^*\right) + O\left(\frac{1}{\sqrt{NT}}\right).
\end{aligned}
\end{equation}
For any block $m$, we have
\begin{align}
\mathbb{E}\left[F_m\left(\tilde{\mathbf{u}}_m\right)\right]
\le \frac{1}{\parallel \mathcal{R}^m\parallel}\sum_{t_r \in \mathcal{R}^m}\mathbb{E}\left[F_m\left(\bar{\mathbf{u}}_{t_r}\right)\right]
\le \frac{1}{\parallel \mathcal{R}^m\parallel}\sum_{t_r \in \mathcal{R}^m}\mathbb{E}\left[F_m\left(\bar{\mathbf{y}}_{t_r}\right)\right]
\le F_m\left(\mathbf{x}_m^*\right) + O\left(\frac{\sqrt{M}}{\sqrt{NT}}\right).
\end{align}
\end{proof}

\subsection{Proof of Lemmas in Theorem \ref{item:th2}}\label{th2-lemma}
\begin{proof}[\textbf{Proof of Lemma \ref{lemma:th2-1}}]
According to Theorem \ref{item:th1}, if we choose $\gamma=\frac{\sqrt{N}}{L\sqrt{T}}, \eta=\frac{\sqrt{NM}}{L\sqrt{T}}$, and $I\le \frac{(T/M)^{\frac{1}{4}}}{N^{\frac{3}{4}}}$, when $K>CMN$, we have
\begin{align}
\frac{1}{\parallel \mathcal{R}\parallel}\sum_{t_r\in \mathcal{R}}\mathbb{E} \left[F_m\left(\bar{\mathbf{x}}_{t_r}\right)\right]-F\left(\mathbf{x}^*\right)\le O\left(\frac{1}{\sqrt{NT}}\right). \label{eq:s7-1}
\end{align}

And for any block-separate chain, it is equal to the non-cyclic case with $T/M$ iterations. According to \cite{DBLP:conf/aaai/Yu19}, we have
\begin{align}
\frac{M}{T}\sum_{t\in \mathcal{I}_m}\mathbb{E}\left[F_m\left(\bar{\mathbf{y}}_t\right)\right]-F_{m}\left(\mathbf{x}_m^*\right)
\le
O\left(\frac{\sqrt{M}}{\sqrt{NT}}\right), \label{eq:s7-2}
\end{align}
where $\mathcal{I}_m$ denotes the set of all the iterations corresponding to block $m$.

As for $\bar{\mathbf{y}}_t$, applying Lemma \ref{lemma:th1-6} to the block-separate training process for the block $m$ yields
\begin{align}
\frac{1}{\parallel \mathcal{R}^m\parallel}\sum_{t_r \in \mathcal{R}^m}\mathbb{E}\left[F_m(\bar{\mathbf{y}}_{t_r})\right]
\le \frac{M}{T}\sum_{t\in \mathcal{I}_m}\mathbb{E} \left[F_m\left(\bar{\mathbf{y}}_t\right)\right]+O\left(\frac{\sqrt{M}}{\sqrt{NT}}\right). \label{eq:s7-3}
\end{align}
Substituting equation (\ref{eq:s7-3}) into (\ref{eq:s7-2}) yields
\begin{align}
\frac{1}{\parallel \mathcal{R}^m\parallel}\sum_{t_r \in \mathcal{R}^m}\mathbb{E}\left[F_m\left(\bar{\mathbf{y}}_{t_r}\right)\right]-F_m\left(\mathbf{x}_m^*\right)\le O\left(\frac{\sqrt{M}}{\sqrt{NT}}\right). \label{eq:s7-4}
\end{align}
\end{proof}

\begin{proof}[\textbf{Proof of Lemma \ref{lemma:th2-2}}]
According to MC-PSGD, the server will choose a model with a smaller average local loss at each communication iteration $t_r$, so we have
\begin{align}
\frac{1}{N}\sum_{i=1}^N f\left(\bar{\mathbf{u}}_{t_r},\xi_{t_r}^i\right)
\le
\min\left\{\frac{1}{N}\sum_{i=1}^N f\left(\bar{\mathbf{x}}_{t_r},\xi_{t_r}^i\right),\frac{1}{N}\sum_{i=1}^N f\left(\bar{\mathbf{y}}_{t_r},\xi_{t_r}^i\right)\right\}, \label{eq:s8-1}
\end{align}
\end{proof}
where $\xi_{t_r}^i$ denotes the local data of client $i$ at communication iteration $t_r$.

Taking expectations on both side w.r.t. $\xi_{t_r}^i$ yields
\begin{align}
\forall t:\ 
\mathbb{E} \left[F_m\left(\bar{\mathbf{u}}_{t_r}\right)\right] 
\le 
\min\left\{
\mathbb{E}\left[F_m\left(\bar{\mathbf{x}}_{t_r}\right)\right],
\mathbb{E}\left[F_m\left(\bar{\mathbf{y}}_{t_r}\right)\right]
\right\}. \label{eq:s8-2}
\end{align}
So we have 
\begin{align}
&\frac{1}{\parallel \mathcal{R}\parallel}\sum_{t_r \in \mathcal{R}}\mathbb{E} \left[F_m\left(\bar{\mathbf{u}}_{t_r}\right)\right] \le \frac{1}{\parallel \mathcal{R}\parallel}\sum_{t_r\in R}\mathbb{E}\left[F_m\left(\bar{\mathbf{x}}_{t_r}\right)\right], \label{eq:s8-3}
\end{align}
and
\begin{align}
&\frac{1}{\parallel \mathcal{R}^m\parallel}\sum_{t_r \in \mathcal{R}^m}\mathbb{E}\left[F_m\left(\bar{\mathbf{u}}_{t_r}\right)\right]\le \frac{1}{\parallel \mathcal{R}^m\parallel}\sum_{t_r \in \mathcal{R}^m}\mathbb{E}\left[F_m\left(\bar{\mathbf{y}}_{t_r}\right)\right],  \ \ \ \forall m \in \{1,2 \cdots, M \}.
\end{align}
\section{Other Supplementary Lemmas}
\begin{lemma}\label{lemma:s1-1}
Under Assumptions \ref{assum:no2} and  \ref{assum:no5}, for  $\forall \mathbf{x},\mathbf{y},m$, we have
\begin{align}
F_m\left(\mathbf{x}\right)-F_m\left(\mathbf{y}\right)\le 2B^2L^2 + 2B^2 + 2B^2L.
\end{align}
\begin{proof}
By the $L$-smoothness of $F_m(\mathbf{x})$, we have
\begin{align}
F_m\left(\mathbf{x}\right)-F_m\left(\mathbf{y}\right)\le\left<\nabla F_m\left({\bf y}\right),\mathbf{x}-\mathbf{y}\right>+\frac{L}{2}\parallel \mathbf{x}-\mathbf{y} \parallel^2. \label{eq:smooth}
\end{align}
By the basic inequality and the bound on the model's norm, we have
\begin{equation}
\begin{aligned}
\left<\nabla F_m\left(\mathbf{y}\right),\mathbf{x}-\mathbf{y}\right>
&\le \frac{\parallel \nabla F_m\left(\mathbf{y}\right)\parallel^2+\parallel \mathbf{x}-\mathbf{y} \parallel ^2}{2}\\
&\le \frac{\parallel \nabla F_m\left(\mathbf{y}\right)\parallel^2+2\parallel \mathbf{x}\parallel ^2 +2\parallel \mathbf{y} \parallel ^2}{2}
\le \frac{\parallel \nabla F_m\left(\mathbf{y}\right)\parallel^2+4B^2}{2}. 
\end{aligned}\label{eq:deltaF}
\end{equation}
By the $L$-smoothness of $F_m(\mathbf{x})$, we have
\begin{align}
\parallel \nabla F_m\left(\mathbf{y}\right)\parallel^2
= \parallel \nabla F_m\left(\mathbf{y}\right) - \nabla F_m\left(\mathbf{x}_m^*\right) \parallel^2
\le L^2 \parallel {\bf y} - {\bf x}_m^* \parallel^2
\le 4B^2L^2. \label{eq:deltabound}
\end{align}
Substituting equation (\ref{eq:deltabound}) into (\ref{eq:deltaF}) yields
\begin{align}
\left<\nabla F_m\left(\mathbf{y}\right),\mathbf{x}-\mathbf{y}\right>
\le 2B^2L^2 + 2B^2. \label{eq:l9}
\end{align}
Substituting equation (\ref{eq:l9}) into (\ref{eq:smooth}) yields
\begin{align}
F_m\left(\mathbf{x}\right)-F_m\left(\mathbf{y}\right) 
\le
2B^2L^2 + 2B^2 + \frac{L}{2}\left(2 \parallel \mathbf{x}\parallel ^2 + 2\parallel \mathbf{y}\parallel^2\right)
\le 
2B^2L^2 + 2B^2 + 2B^2L. \label{eq:Fdiff}
\end{align}
\end{proof}
\end{lemma}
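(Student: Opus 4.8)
The plan is to treat this as a direct consequence of $L$-smoothness combined with the norm bound from Assumption~\ref{assum:no5}, bounding the right-hand side of the smoothness inequality term by term. First I would observe that since $F_m=\frac{1}{N}\sum_{i=1}^N F_m^i$ and each $F_m^i$ is $L$-smooth by Assumption~\ref{assum:no2}, the average $F_m$ is itself $L$-smooth, so for any $\mathbf{x},\mathbf{y}$ we may write
\begin{align*}
F_m\left(\mathbf{x}\right)-F_m\left(\mathbf{y}\right)\le \left<\nabla F_m\left(\mathbf{y}\right),\mathbf{x}-\mathbf{y}\right>+\frac{L}{2}\parallel\mathbf{x}-\mathbf{y}\parallel^2.
\end{align*}
This reduces the problem to controlling the inner-product term and the quadratic term separately.

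For the quadratic term, the plan is to use the elementary bound $\parallel\mathbf{x}-\mathbf{y}\parallel^2\le 2\parallel\mathbf{x}\parallel^2+2\parallel\mathbf{y}\parallel^2\le 4B^2$ afforded by Assumption~\ref{assum:no5}, which immediately gives $\frac{L}{2}\parallel\mathbf{x}-\mathbf{y}\parallel^2\le 2B^2L$. For the inner-product term, I would apply the basic inequality $\left<\mathbf{a},\mathbf{b}\right>\le \frac{1}{2}\left(\parallel\mathbf{a}\parallel^2+\parallel\mathbf{b}\parallel^2\right)$ with $\mathbf{a}=\nabla F_m\left(\mathbf{y}\right)$ and $\mathbf{b}=\mathbf{x}-\mathbf{y}$, and again invoke $\parallel\mathbf{x}-\mathbf{y}\parallel^2\le 4B^2$, so that only the gradient norm $\parallel\nabla F_m\left(\mathbf{y}\right)\parallel^2$ remains to be bounded.

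The one step requiring a little care is this gradient-norm bound, which is where I would introduce the block minimizer $\mathbf{x}_m^*$ satisfying $\nabla F_m\left(\mathbf{x}_m^*\right)=0$. Writing $\nabla F_m\left(\mathbf{y}\right)=\nabla F_m\left(\mathbf{y}\right)-\nabla F_m\left(\mathbf{x}_m^*\right)$ and using that $L$-smoothness implies $L$-Lipschitz gradients, I obtain $\parallel\nabla F_m\left(\mathbf{y}\right)\parallel^2\le L^2\parallel\mathbf{y}-\mathbf{x}_m^*\parallel^2\le 4B^2L^2$, where the final inequality is once more $\parallel\mathbf{y}-\mathbf{x}_m^*\parallel^2\le 2\parallel\mathbf{y}\parallel^2+2\parallel\mathbf{x}_m^*\parallel^2\le 4B^2$. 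Substituting this into the Young-type bound yields $\left<\nabla F_m\left(\mathbf{y}\right),\mathbf{x}-\mathbf{y}\right>\le \frac{1}{2}\left(4B^2L^2+4B^2\right)=2B^2L^2+2B^2$, and adding the quadratic contribution $2B^2L$ produces exactly the claimed bound.

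The main obstacle here is conceptual rather than computational: the statement invokes only Assumptions~\ref{assum:no2} and~\ref{assum:no5}, yet the gradient-norm bound relies on a block optimum $\mathbf{x}_m^*$ existing, being feasible, and satisfying $\nabla F_m\left(\mathbf{x}_m^*\right)=0$. This is secured by the strong convexity of Assumption~\ref{assum:no1}, which guarantees a unique minimizer, together with the bounded-parameter constraint that keeps $\mathbf{x}_m^*$ within the feasible region; I would make this dependence explicit rather than leave it implicit, since otherwise the vanishing-gradient identity is not justified.
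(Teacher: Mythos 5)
Your proof is correct and follows essentially the same route as the paper's own: the identical smoothness expansion $F_m(\mathbf{x})-F_m(\mathbf{y})\le\left<\nabla F_m(\mathbf{y}),\mathbf{x}-\mathbf{y}\right>+\frac{L}{2}\parallel\mathbf{x}-\mathbf{y}\parallel^2$, the same Young-type bound on the inner product, the same gradient estimate $\parallel\nabla F_m(\mathbf{y})\parallel^2=\parallel\nabla F_m(\mathbf{y})-\nabla F_m(\mathbf{x}_m^*)\parallel^2\le L^2\parallel\mathbf{y}-\mathbf{x}_m^*\parallel^2\le 4B^2L^2$, and the same $\parallel\mathbf{x}-\mathbf{y}\parallel^2\le 2\parallel\mathbf{x}\parallel^2+2\parallel\mathbf{y}\parallel^2\le 4B^2$ estimates, yielding the identical constant $2B^2L^2+2B^2+2B^2L$. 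Your closing observation---that the identity $\nabla F_m(\mathbf{x}_m^*)=\mathbf{0}$ tacitly relies on Assumption~\ref{assum:no1} for the existence of the minimizer and on $\mathbf{x}_m^*$ lying inside the bounded region of Assumption~\ref{assum:no5}, even though the lemma's hypotheses cite neither---is a fair point that the paper's proof also uses silently, and making it explicit is a genuine improvement in rigor rather than a deviation in method.
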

\end{document}